\typeout{IJCAI--25 Instructions for Authors}


\documentclass{article}
\pdfpagewidth=8.5in
\pdfpageheight=11in

\usepackage{ijcai25}

\usepackage{times}
\usepackage{soul}
\usepackage{url}
\usepackage[hidelinks]{hyperref}
\usepackage[utf8]{inputenc}
\usepackage[small]{caption}
\usepackage{graphicx}
\usepackage{amsmath}
\usepackage{amsthm}
\usepackage{booktabs}
\usepackage{algorithm}
\usepackage{algorithmic}
\usepackage[switch]{lineno}
\usepackage{amssymb}
\usepackage{multirow}
\usepackage{subfigure}
\usepackage{hyperref}


\urlstyle{same}



\newtheorem{theorem}{Theorem}

\newtheorem{lemma}{Lemma}





\pdfinfo{
/TemplateVersion (IJCAI.2025.0)
}


\title{On the Discrimination and Consistency for Exemplar-Free \\ Class Incremental Learning}

\author{
    Tianqi Wang\textsuperscript{1,2}, Jingcai Guo\textsuperscript{1,$\ast$}, Depeng Li\textsuperscript{3}, Zhi Chen\textsuperscript{4}
    \affiliations
    \textsuperscript{1}The Hong Kong Polytechnic University~\textsuperscript{2}University College London~\\
    \textsuperscript{3}Huazhong University of Science and Technology~\textsuperscript{4}The University of Queensland
    \emails
    tianqi.wang.23@ucl.ac.uk,
    jc-jingcai.guo@polyu.edu.hk,
    dpli@hust.edu.cn,
    zhi.chen@uq.edu.au\\
    \textbf{$\ast$Corresponding author: Jingcai Guo}
}

\begin{document}

\maketitle

\begin{abstract}

Exemplar-free class incremental learning (EF-CIL) is a nontrivial task that requires continuously enriching model capability with new classes while maintaining previously learned knowledge without storing and replaying any old class exemplars.
%
An emerging theory-guided framework for CIL trains task-specific models for a shared network, shifting the pressure of forgetting to task-id prediction. 
In EF-CIL, 
task-id prediction is more challenging due to the lack of inter-task interaction (e.g., replays of exemplars). To address this issue, we conduct a theoretical analysis of the importance and feasibility of preserving a discriminative and consistent feature space, upon which we propose a novel method termed DCNet. 
%
Concretely, it progressively maps class representations into a hyperspherical space, in which different classes are orthogonally distributed to achieve ample inter-class separation. Meanwhile, it also introduces compensatory training to adaptively adjust supervision intensity, thereby aligning the degree of intra-class aggregation. 
Extensive experiments and theoretical analysis verified the superiority of the proposed DCNet\footnote{Code: \url{https://anonymous.4open.science/r/DCNet-70E9}.}.
 
\end{abstract}

\section{Introduction}\label{introduction}
Deep neural networks have achieved state-of-the-art performance in various tasks, yet they often struggle with Class Incremental Learning (CIL). In CIL, the model is constrained to learn new classes on non-stationary data distributions. This scenario can result in Catastrophic Forgetting (CF) ~\cite{mccloskey1989catastrophic}, as new parameters overwrite those learned for previous tasks. Meanwhile, CIL focuses on not relying on privileged information such as task-ids during inference~\cite{wang2023beef}.

Exemplar-based methods, which preserve a portion of the samples from previous tasks for replay, have demonstrated strong performance in CIL. However, in the era of connectivity, data privacy has become increasingly crucial. The rising concern over data privacy conflicts with the exemplar-based approach~\cite{zhuang2022acil}, thereby constraining its applicability. Recently, EF-CIL has attracted considerable attention because it entirely eliminates the need for replay samples, making it suitable for deployment in scenarios where privacy preservation and storage limitation are critical. Despite this advantage, existing EF-CIL methods are prone to more severe CF because training on a new task overwrites the parameter space learned for previous tasks. Classic strategies reduce the alteration of important weight by imposing constraints such as regularization using the Fisher information matrix~\cite{PNAS2017EWC}. More recent method leverages the Hilbert-Schmidt independence criterion for more stringent constraints~\cite{li2024CLDNet}. Rather than directly focusing on the weight, alternative approaches aim to maintain the semantic consistency of the prior feature space, typically through the use of class prototypes~\cite{zhu2021prototype,toldo2022Fusion,magistri2024EFC}. Additionally, there are strategies that utilize the generation of pseudo-samples to recover old prototypes~\cite{meng2025Diffclass}. 

Parallel to these approaches, theoretical research suggests that a proficient CIL model can be broken down into a task-incremental learning (TIL) + out-of-distribution (OOD) task~\cite{kim2022theoretical}. TIL typically involves training a separate model for each individual task and selecting appropriate inference paths or output heads through known task-ids. Based on this, the TIL+OOD architecture entails training a TIL-like model that constructs a new OOD classifier when faced with a new task. Consequently, an independent OOD classifier for each task would concurrently handle in-distribution (IND) classification, which is \emph{within-task prediction}, and OOD detection to ensure accurate \emph{task-id prediction}. During inference, for each test sample, the framework evaluates the probabilities of both to make a decision. This architecture facilitates the sharing of inter-task generalized knowledge while preserving intra-task specific knowledge, thereby demonstrating superior performance on CIL tasks. However, the TIL+OOD architecture results in task isolation during training, as it prevents access to the embedded representations and output magnitudes of previous tasks while learning the current task. Paradoxically, effective decision-making requires comparing outputs across the incremental sequence. This contradiction ultimately leads to a performance bottleneck in task-id prediction that requires inter-task interaction. Previous researches~\cite{kim2022MORE,kim2023ROW,lin2024TPL} have primarily focused on using replay samples to facilitate interaction, but this invades data privacy. Our work emphasizes the implementation of inter-task interactions for the TIL+OOD framework in the context of EF-CIL.

From this perspective, we argue that preserving the discriminative and consistent feature space is crucial for enabling effective task interaction. \emph{(i) Enhancing discriminability for a single task}. The feature space generated by general embedding methods, although effective for within-task prediction, frequently falls short in supporting OOD detection~\cite{deng2024EHS,ming2023cider}. This limitation arises because OOD detection necessitates more discriminative features. \emph{(ii) Ensuring consistency across incremental tasks}. Even with perfect OOD detection for each task, the isolation between tasks can lead to varying output magnitudes~\cite{kim2022MORE,kim2022theoretical}. Our theoretical analysis indicates that maintaining discriminative and consistent feature space can be achieved by enhancing inter-class separation and aligning intra-class aggregation.


In this paper, we introduce a novel method to EF-CIL named Discriminative and Consistent Network (DCNet). This multi-head model leverages HAT~\cite{serra2018HAT} to learn task-specific masks for protecting the knowledge and makes decisions by comparing a sequence of OOD classifier outputs. To fully exploit the information in incremental tasks for interaction, DCNet comprises two key components: (1) Incremental Orthogonal Embedding (IOE), where we sequentially generate basis vectors that are orthogonally distributed on the unit hypersphere. The model then embeds the corresponding category features as closely as possible to these predefined vectors. This guarantees that the embedding of each category remains orthogonal to those of prior and future categories, thereby enhancing and aligning intra-task separation. (2) Dynamic Aggregation Compensation (DAC), which addresses the issue due to decreasing model plasticity by adaptively compensating for the reduced feature aggregation of subsequent tasks. DAC brings incremental feature embedding with more even intra-class aggregation. Benefiting from the synergy of these two components, DCNet effectively preserves the discriminative and consistent characteristics of the features, and does not rely on replaying samples or pre-trained models. Our main contributions are threefold:

\begin{itemize}
	\item Theoretical analyses are provided to demonstrate the feasibility of leveraging information in incremental sequences to preserve discriminative and consistent features for the framework of TIL+ODD. To the best of our knowledge, we are the first to formally discuss how to optimize TIL+OOD model in the context of EF-CIL.

	\item DCNet not only incrementally embeds category features on the unit hypersphere but also maintains inter-class orthogonality. Furthermore, additional adaptive compensation helps balance the degree of intra-class aggregation across all tasks.
    
	\item Experiments conducted across multiple benchmark datasets consistently demonstrate that our method achieves highly competitive EF-CIL performance, with an average improvement of 8.33\% over the latest state-of-the-art method on ImageNet-Subset task.
\end{itemize}

\section{Related Work}
\textbf{Class-Incremental Learning.}
CIL necessitates that the model incrementally learn new classes without forgetting
previously acquired knowledge. Classical CIL methods often maintain a certain number of exemplars from previous classes, which are replayed upon the arrival of a new task~\cite{rebuffi2017icarl,buzzega2020DER++,yan2021dynamically,wang2022foster,wang2023beef}. Replay strategy effectively reduces CF; however, concerns over privacy and memory limitation restrict its practicality. Exemplar-free approaches have focused on mitigating CF without relying on replay samples~\cite{zhu2023SSRE,rypesc2023SEED,gomez2025LDC}. EWC~\cite{PNAS2017EWC} employs the Fisher information matrix to constrain significant alterations in the weight space. LwF~\cite{li2017learning} ensures that the output of the current model remains close to that of the previous model. PASS~\cite{zhu2021prototype} leverages self-supervised learning to train a backbone network and maintains the consistency of class prototypes. FeTrIL~\cite{petit2023fetril} transforms old prototype features based on the differences between old and new prototypes. ADC~\cite{goswami2024ADC} uses adversarial samples against old task categories to estimate feature drift. EFC~\cite{magistri2024EFC} identifies critical directions in the feature space for the previous task. However, some EF-CIL methods depend on a large initial task to train the backbone network and subsequently freeze it during increments. More recently, some studies have also explored utilizing pre-trained diffusion model or saliency detection network to mitigate forgetting~\cite{meng2025Diffclass,liu2024TASS}. Our end-to-end approach explores the application of TIL+OOD framework to EF-CIL without depending on a large initial task or a pre-trained model.

\textbf{Task-id Predictor.}
One special approach to addressing the CIL problem involves utilizing multi-head models with task-id prediction. Specifically, CCG~\cite{abati2020CCG} builds a separate network to predict task-id, while iTAML~\cite{rajasegaran2020itaml} necessitates batched samples for task-id prediction during inference. HyperNet~\cite{von2019HyperNet} and PR-Ent~\cite{henning2021PR-Ent} employ entropy for task-id prediction. Prior research has highlighted that the performance bottleneck of these systems stems from failing to realize the relationship between task-id prediction and OOD detection~\cite{kim2022theoretical}. OOD detection requires models not only to accurately identify data from known distributions (i.e., categories learned during training), but also to detect samples outside these distributions (i.e., unknown categories)~\cite{morteza2022provable,ming2023cider,Lu2024PALM}.  Kim et al.~\shortcite{kim2022theoretical} conducted a theoretical analysis of the TIL+OOD architecture, demonstrating its applicability to CIL tasks. Building on this, MORE~\cite{kim2022MORE} and ROW~\cite{kim2023ROW} adopt the same structure using pre-trained models and replay samples. The latest work TPL~\cite{lin2024TPL}, leverages replay samples to construct likelihood ratios, thereby enhancing task-id prediction. Our proposed method also falls under the TIL+OOD framework. However, unlike previous methods, DCNet focuses on efficiently utilizing information from incremental sequences to accomplish inter-task interaction without replay samples.

\section{Theoretical Analysis}\label{Analysis}
\subsection{Class-Incremental Learning Setup}
Class-Incremental Learning (CIL) aims to address a sequence of tasks \(1, \dots, T\). Each task \(t\) consists of an input space \(\mathcal{X}^{(t)}\), a label space \(\mathcal{Y}^{(t)}\), and a training set \(\mathcal{D}^{(t)} = \{(x_j^{(t)}, y_j^{(t)})\}_{j=1}^{N^{(t)}}\), where $N^{(t)}$ is the number of samples. The label spaces of different tasks have no overlap, i.e., \(\mathcal{Y}^{(i)} \cap \mathcal{Y}^{(k)} = \emptyset, \forall i \neq k\). The objective of CIL is to train a progressively updated model that can effectively map the entire input space \(\bigcup_{t=1}^T \mathcal{X}^{(t)}\) to the corresponding label space \(\bigcup_{t=1}^T \mathcal{Y}^{(t)}\). Kim et al.~\shortcite{kim2022theoretical} proposed a novel theory for solving the CIL problem. They decomposed the probability of a sample \(x\) belonging to class \(y_j^{(t)}\) of task \(t\) as follows:
\begin{equation}
P(y_j^{(t)} \mid x) = P(y_j^{(t)} \mid x, t) \cdot P(t \mid x).
\end{equation}

The formulation can be decoupled into two components: \textit{within-task prediction} and \textit{task-id prediction}. In CIL, data from different tasks can be considered as OOD samples to each other. However, only relying on traditional OOD detection methods to build separate model results in task isolation. We highlight that interaction between tasks can be achieved by preserving the discriminative and consistent feature space. In the subsequent sections, we analyze how these properties can be implemented theoretically. Subsection~\ref{OOD Detection Capabilities} underscores the importance of inter-class separation and intra-class aggregation through a theorem. Subsection~\ref{Task Information Interaction} elaborates on how both concepts facilitate information interaction.

\subsection{OOD Detection Capabilities}\label{OOD Detection Capabilities}
We now discuss how inter-class separation and intra-class aggregation affect the performance of OOD detection in the context of EF-CIL. The theory proposed by Morteza and Li~\shortcite{morteza2022provable} demonstrates that the effectiveness of pure OOD detection is closely tied to the distance between the IND and OOD data. We generalize this theory to incremental learning tasks.

Let $\{\mu_{\mathrm{in},i}\}_{i=1}^k$ be the mean vectors of $k$ Gaussian components representing the IND (a task contain $k$ distinct categories). Consider a sequence of OOD Gaussian mean vectors $\{\mu_{\mathrm{out},t}\}_{t=1}^T$, describing a uniformly weighted Gaussian mixture model for OOD data (an incremental sequence contain $T$ tasks), and let $\Sigma$ be the positive definite shared covariance matrix. IND and OOD exhibit distinct and significant differences. Define a scoring function $ES(x)$ that is proportional to the data density, and a measure $D$ of the expectation difference in $ES(x)$ between IND and OOD samples:
\begin{equation}\label{ES(x)}
  ES(x) \;=\; \sum_{i=1}^k \exp~\!\Bigl (-\tfrac{1}{2}\,(x - \mu_i)^\top\,\Sigma^{-1}\,(x - \mu_i)\Bigr),
\end{equation}
\begin{equation}\label{D}
  D \;=\; \mathbb{E}_{x \sim P_\chi^{\mathrm{in}}}\bigl[ES(x)\bigr] 
  \;-\; 
  \mathbb{E}_{x \sim P_\chi^{\mathrm{out}}}\bigl[ES(x)\bigr].
\end{equation}

Recall the Mahalanobis distance:
\(d_M(u, v) = 
\sqrt{\,(u - v)^\top\,\Sigma^{-1}\,(u - v)\,}\). Our objective is to investigate the factors influencing $D$, a metric that quantifies the performance of OOD detection in incremental sequences. We investigate the upper bound of $D$, which is derived using the Total Variation and the Pinsker's inequality (see Lemma~\ref{Total Variation} and ~\ref{Pinsker's inequality}).

\begin{lemma} \label{upper bound}
Let
\(\alpha_{i,t} := 
\tfrac{1}{2}\, d_M\!\bigl(\mu_{\mathrm{in},i}, \mu_{\mathrm{out},t}\bigr),\; i = 1,\ldots, k,\; t = 1,\ldots, T\), then we have the following estimate:
\begin{equation} \label{upper bound equation}
\mathbb{E}_{x \sim P_\chi^{\text{in}}}(ES(x)) - \mathbb{E}_{x \sim P_\chi^{\text{out}}}(ES(x)) \leq \frac{1}{T} \sum_{t=1}^T \sum_{i=1}^k \alpha_{i,t}.
\end{equation}

Details of the proof are provided in Appendix~\ref{proof2}.
\end{lemma}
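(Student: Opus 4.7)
The plan is to turn the difference of expectations into a total-variation estimate and then exploit the mixture structure on the OOD side. Writing
\[
D = \int ES(x)\bigl(p^{\mathrm{in}}(x) - p^{\mathrm{out}}(x)\bigr)\,dx,
\]
I would first use $ES(x)\geq 0$ together with the elementary bound $\int f(p-q) \leq \|f\|_\infty \int (p-q)_+ = \|f\|_\infty \cdot TV(P,Q)$ (invoking the Total Variation lemma in the appendix) to get $D \leq \|ES\|_\infty \cdot TV(P_\chi^{\mathrm{in}}, P_\chi^{\mathrm{out}})$. Since each of the $k$ summands in Eq.~(\ref{ES(x)}) has a non-positive exponent, every term lies in $(0,1]$ and hence $\|ES\|_\infty \leq k$. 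This $k$ will be absorbed by a $1/k$ that emerges from the mixture bookkeeping below.

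Next, to make convexity of TV applicable, I would inflate both mixtures to a common $kT$-component form without changing the distributions:
\[
P_\chi^{\mathrm{in}} = \frac{1}{kT}\sum_{i=1}^k\sum_{t=1}^T N(\mu_{\mathrm{in},i},\Sigma), \qquad P_\chi^{\mathrm{out}} = \frac{1}{kT}\sum_{i=1}^k\sum_{t=1}^T N(\mu_{\mathrm{out},t},\Sigma).
\]
Now the weights match, so convexity of TV over identically weighted mixtures yields
\[
TV\bigl(P_\chi^{\mathrm{in}},P_\chi^{\mathrm{out}}\bigr) \leq \frac{1}{kT}\sum_{i=1}^k\sum_{t=1}^T TV\bigl(N(\mu_{\mathrm{in},i},\Sigma),\,N(\mu_{\mathrm{out},t},\Sigma)\bigr).
\]

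For each individual pair I would then invoke Pinsker's inequality together with the closed-form KL of two Gaussians sharing a covariance, $KL(N(\mu_1,\Sigma)\|N(\mu_2,\Sigma)) = \tfrac12 d_M^2(\mu_1,\mu_2)$, to obtain $TV\bigl(N(\mu_{\mathrm{in},i},\Sigma),N(\mu_{\mathrm{out},t},\Sigma)\bigr) \leq \tfrac{1}{2}d_M(\mu_{\mathrm{in},i},\mu_{\mathrm{out},t}) = \alpha_{i,t}$. Chaining the three estimates gives
\[
D \;\leq\; k \cdot \frac{1}{kT}\sum_{i,t}\alpha_{i,t} \;=\; \frac{1}{T}\sum_{t=1}^T\sum_{i=1}^k \alpha_{i,t},
\]
exactly the desired estimate~(\ref{upper bound equation}).

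The main obstacle I anticipate is the convexity step, because the native weights $1/k$ on $P_\chi^{\mathrm{in}}$ and $1/T$ on $P_\chi^{\mathrm{out}}$ are mismatched, so the standard estimate $TV(\sum w_j P_j,\sum w_j Q_j) \leq \sum w_j TV(P_j,Q_j)$ does not apply directly. The $kT$-fold padding above is the small manoeuvre that produces the double sum over $(i,t)$ on the right-hand side; verifying that this rewrite is genuinely distribution-preserving (each $\mu_{\mathrm{in},i}$ still appears with total weight $1/k$, each $\mu_{\mathrm{out},t}$ with total weight $1/T$) and that it keeps the bound tight enough to absorb exactly one factor of $k$ is the subtle point of the argument.
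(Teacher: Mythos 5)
Your proposal is correct and follows essentially the same route as the paper's proof in Appendix~\ref{proof2}: bound the difference of expectations by $k\cdot\delta(P_\chi^{\mathrm{in}},P_\chi^{\mathrm{out}})$ using $ES(x)\in[0,k]$, rewrite the mismatched $1/k$- and $1/T$-weighted mixtures as a common $\tfrac{1}{kT}$-weighted double sum so that convexity of total variation applies, and finish each pairwise term with Pinsker's inequality and the shared-covariance Gaussian KL formula $KL=\tfrac12 d_M^2$. The ``padding'' manoeuvre you flag as the subtle point is exactly the step the paper performs when it writes $\tfrac{1}{k}\sum_i P_i^{\mathrm{in}}(A)-\tfrac{1}{T}\sum_t P_t^{\mathrm{out}}(A)$ as $\tfrac{1}{kT}\sum_{i,t}\bigl(P_i^{\mathrm{in}}(A)-P_t^{\mathrm{out}}(A)\bigr)$ before taking the supremum inside.
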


Lemma \ref{upper bound} demonstrates that as $\mu_{\mathrm{in},i}$ and $\mu_{\mathrm{out},t}$ become more distant, the overall OOD detection performance improves. However, within the context of EF-CIL, the OOD data is unavailable, making it challenging to explicitly increase the divergence. Consequently, the subsequent derivation aims to further relax the upper bound while introducing inter-class separation, which can be estimated in EF-CIL.

For any $\mu_{\mathrm{out},t}$, we can always find a nearest $\mu_{\mathrm{in}, i_{0}}$ that satisfies the triangle inequality, denoted as:
\begin{equation}\label{inequality}
d_M(\mu_{\mathrm{in},i}, \mu_{\mathrm{out},t}) \leq d_M(\mu_{\mathrm{out},t}, \mu_{\mathrm{in},i_0}) + d_M(\mu_{\mathrm{in},i_0}, \mu_{\mathrm{in},i}).
\end{equation}

We end up with the following theorem.

\begin{theorem}
\label{thm:extended}
Consider a sequential OOD detection task in the context of CIL, we have the following bounds:
\begin{align} \label{maha}
  & \mathbb{E}_{x \sim P_\chi^{\mathrm{in}}}\bigl[ES(x)\bigr]-
  \mathbb{E}_{x \sim P_\chi^{\mathrm{out}}}\bigl[ES(x)\bigr] \notag \\
  & \le
    \frac{k}{2T} \sum_{t=1}^T d_M\!\bigl(\mu_{\mathrm{out},t}, \mu_{\mathrm{in}, i_{0}}\bigr) +
    \frac{1}{2} \sum_{i=1}^k d_M\!\bigl(\mu_{\mathrm{in}, i_{0}}, \mu_{\mathrm{in},i}\bigr).
\end{align}
\end{theorem}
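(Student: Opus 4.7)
The plan is to chain the bound from Lemma~\ref{upper bound} with the triangle inequality stated in~(\ref{inequality}). Substituting $\alpha_{i,t} = \tfrac{1}{2}\,d_M(\mu_{\mathrm{in},i},\mu_{\mathrm{out},t})$ into Lemma~\ref{upper bound} gives $\mathbb{E}_{x\sim P_\chi^{\mathrm{in}}}[ES(x)] - \mathbb{E}_{x\sim P_\chi^{\mathrm{out}}}[ES(x)] \le \tfrac{1}{2T}\sum_{t=1}^{T}\sum_{i=1}^{k} d_M(\mu_{\mathrm{in},i},\mu_{\mathrm{out},t})$, so everything reduces to bounding this double sum by the right-hand side of~(\ref{maha}).

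Next, for each $t$ I let $i_0 = i_0(t) \in \arg\min_{i} d_M(\mu_{\mathrm{out},t},\mu_{\mathrm{in},i})$ and apply~(\ref{inequality}) termwise inside the inner sum. Summing the two resulting pieces separately over $i = 1,\ldots,k$ yields $\sum_{i=1}^{k} d_M(\mu_{\mathrm{in},i},\mu_{\mathrm{out},t}) \le k\cdot d_M(\mu_{\mathrm{out},t},\mu_{\mathrm{in},i_0(t)}) + \sum_{i=1}^{k} d_M(\mu_{\mathrm{in},i_0(t)},\mu_{\mathrm{in},i})$, where the first summand is independent of $i$ and therefore picks up the factor $k$. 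Summing over $t$ and dividing by $2T$, the first piece produces the first term $\tfrac{k}{2T}\sum_{t=1}^{T} d_M(\mu_{\mathrm{out},t},\mu_{\mathrm{in},i_0})$ of~(\ref{maha}) verbatim, which is the part of the bound that rewards placing OOD means far from the nearest IND mean.

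The main obstacle lies in the second piece: the remainder after summing over $t$ is $\tfrac{1}{2T}\sum_{t=1}^{T}\sum_{i=1}^{k} d_M(\mu_{\mathrm{in},i_0(t)},\mu_{\mathrm{in},i})$, in which $i_0$ depends on $t$ and which carries the prefactor $\tfrac{1}{2T}$, whereas the target displays a single $t$-independent $i_0$ with prefactor $\tfrac{1}{2}$. To reconcile this I upper-bound the inner sum uniformly in $t$ by choosing $i_0$ as a worst-case IND index, e.g.\ $i_0 \in \arg\max_{i'}\sum_{i=1}^{k} d_M(\mu_{\mathrm{in},i'},\mu_{\mathrm{in},i})$; the average over $t$ then collapses and delivers exactly $\tfrac{1}{2}\sum_{i=1}^{k} d_M(\mu_{\mathrm{in},i_0},\mu_{\mathrm{in},i})$. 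Under this reading the same symbol $i_0$ plays two slightly different roles across the two summands of~(\ref{maha}) --- nearest neighbour of the OOD mean in the first term, worst-case IND index in the second --- which appears to be the convention intended by the statement, and with it the theorem follows directly. The resulting bound exposes exactly the two levers DCNet is designed to pull, namely pushing incremental OOD classes away from prior IND means and controlling the intra-IND spread.
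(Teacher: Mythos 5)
Your proposal is correct and follows essentially the same route as the paper's own proof: invoke Lemma~\ref{upper bound} to reduce the claim to bounding $\tfrac{1}{2T}\sum_{t}\sum_{i} d_M(\mu_{\mathrm{in},i},\mu_{\mathrm{out},t})$, then apply the triangle inequality~(\ref{inequality}) through the nearest IND mean $\mu_{\mathrm{in},i_0}$ and split the double sum into the two displayed terms. The one place you go beyond the paper is in handling the $t$-dependence of $i_0$: the paper defines $i_0=\arg\min_i d_M(\mu_{\mathrm{out},t},\mu_{\mathrm{in},i})$, which varies with $t$, and then simply ``factors out'' $\sum_i d_M(\mu_{\mathrm{in},i_0},\mu_{\mathrm{in},i})$ as if it were constant across $t$; your replacement of the inner sum by its worst case over the choice of reference index is a legitimate patch that makes the cancellation of the $\tfrac{1}{T}$ average rigorous, at the acknowledged cost of letting $i_0$ denote different indices in the two terms. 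This is a more careful reading than the paper's, not a different proof, and the resulting bound is the same.
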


Theorem~\ref{thm:extended} handles incremental task arrivals and introduces inter-class separation $d_M(\mu_{\mathrm{in}, i_{0}}, \mu_{\mathrm{in},i})$. It can be found that the performance of TIL+OOD approaches can be improved by increasing the inter-class difference of IND prototypes. In essence, if the IND data are well-separated, more space exists in the feature space for OOD samples to be embedded.

Finally, we examine how intra-class aggregation influences detection performance through the shared covariance matrix $\Sigma$. If $\Sigma$ becomes ``smaller'' in the positive-definite ordering (i.e., $\Sigma_a \preceq \Sigma_b$ implies $\Sigma_b - \Sigma_a$ is positive semidefinite), then $\Sigma_a^{-1}$ is ``larger'' compared to $\Sigma_b^{-1}$. Consequently, the same Euclidean displacement leads to a larger Mahalanobis distance under a smaller covariance. For any fixed vector $(u - v)$, if $\Sigma_a \preceq \Sigma_b$, then:
\begin{equation}
(u - v)^\top\,\Sigma_a^{-1}\,(u - v)
\;\;\ge\;\;
(u - v)^\top\,\Sigma_b^{-1}\,(u - v).
\end{equation}

Therefore, when the IND data exhibits a higher degree of intra-class aggregation, OOD samples are pushed farther away in the Mahalanobis distance sense.

\paragraph{Interpretation.} Our analysis highlights two crucial factors.
\textbf{Inter-class separation:} A larger separation between IND prototypes offers a better theoretical margin, especially when the OOD dynamics changes and cannot be estimated.
\textbf{Intra-class aggregation:} A smaller covariance matrix $\Sigma$ indicates that the IND data are more tightly clustered around the class prototype, thereby amplifying Mahalanobis distances to potential OOD samples. 

\begin{figure*}[t]
	\begin{center}		\centerline{\includegraphics[width=1.8\columnwidth]{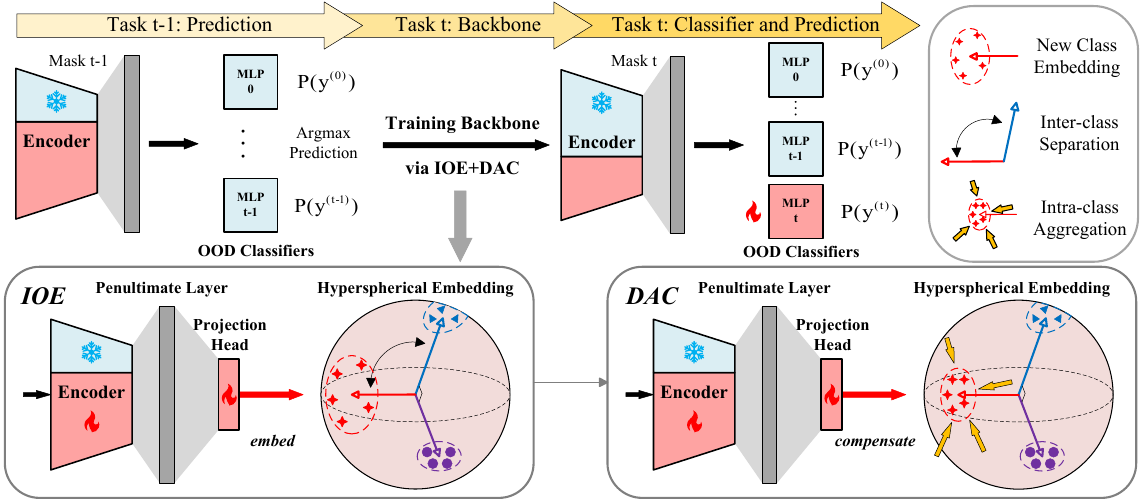}}
        \vskip -0.05in
		\caption{Overview of DCNet. Upon the arrival of task t, DCNet optimizes the learnable part of the backbone and creates a new OOD classifier. Through IOE, DCNet incrementally embeds new features in directions that remain orthogonal to previous categories. Subsequently, the DAC module dynamically compensates for any insufficient aggregation by referencing the degree of aggregation from the previous tasks.}
		\label{Overview_figure}
	\end{center}
	\vskip -0.3in
\end{figure*}

\subsection{Task Information Interaction}\label{Task Information Interaction}
In this subsection, we will discuss the feasibility of employing inter-class separation and intra-class aggregation to preserve discriminative and consistent feature spaces. The TIL+OOD approach consists of a backbone network trained using TIL-like methods and multiple OOD classifiers. During the inference, the final decision $\hat{y}$ is determined by selecting the highest output among the $T$ OOD classifiers:

\begin{equation} \label{decision}
\hat{y} = \arg\max_{1 \leq t \leq T} \oplus P(y^{(t)} \mid x, t),
\end{equation}

\noindent where $\oplus$ denotes the concatenation over the output space. We select the class with the highest softmax probability over each task among all the learned classes. However, even if each individual OOD decision is perfect, Eq. (\ref{decision}) may still lead to incorrect CIL predictions due to varying magnitudes of outputs across different tasks. Breaking the isolation of tasks during the training is essential for achieving comparable outputs. Previous studies~\cite{kim2022MORE,kim2023ROW,lin2024TPL} have facilitated direct information interaction via replay samples; however, these approachs are constrained in the context of EF-CIL. As analyzed in Subsection~\ref{OOD Detection Capabilities}, inter-class separation and intra-class aggregation are critical for maintaining the validity and integrity of the feature space without violating the privacy. Specifically, inter-class separation preserves the discrimination of the feature space, while intra-class aggregation ensures the consistency, thereby enabling effective task information interaction.


\section{Methodology}
\subsection{Overview}\label{overview}
DCNet comprises two essential components: Incremental Orthogonal Embedding (IOE) and Dynamic Aggregation Compensation (DAC), which operate synergistically (see Figure~\ref{Overview_figure}). Consistent with previous works, our approach employs the mask-based method HAT~\cite{serra2018HAT} to shift CF. Specifically, as each task is learned, the model generates a set of masks for important neurons, ensuring that these masks are as compact as possible. Formally, we introduce a loss term $\mathcal{L}_{HAT}$. During the learning of a new task, the masks from the previous model inhibit backpropagation from updating the masked neurons. Since all neurons remain accessible during the forward propagation, inter-task generalized knowledge can be leveraged across all tasks. As the mask is progressively learned, the available weights for updating by subsequent tasks become increasingly sparse, leading to a decrease in model plasticity. This results in the degradation of the feature space for subsequent tasks, where later tasks often exhibit poorer performance compared to earlier tasks in terms of inter-class separation and intra-class aggregation. Through IOE and DAC components, DCNet preserves a discriminative and consistent feature space by facilitating inter-task information interaction.


\subsection{Incremental Orthogonal Embedding (IOE)}\label{IOE}
The core principle of IOE lies in explicitly associating each category with incrementally generated basis vectors that maintain orthogonality, thereby ensuring superior inter-class separation. For an incremental task $t$ with data $\mathbf{x}^{(t)}$, the framework consists of two mappings: An encoder $f: \mathcal{X} \to \mathbb{R}^f$, which maps the input $\mathbf{x}^{(t)}$ to a feature $\mathbf{f}^{(t)} = f(\mathbf{x}^{(t)})$.
A projection head $h: \mathbb{R}^f \to \mathbb{R}^z$, which further maps $\mathbf{f}^{(t)}$ to a embedding $\tilde{\mathbf{z}}^{(t)} = h(\mathbf{f}^{(t)})$. The output embeddings are normalized as $\mathbf{z}^{(t)} = \tilde{\mathbf{z}}^{(t)} / \|\tilde{\mathbf{z}}^{(t)}\|^2$ to reside on a unit hypersphere. 

We aim to more uniformly distribute the unit hypersphere and constrain individual category features within their respective spaces to minimize overlap between categories~\cite{deng2024EHS}. The approach to updating class prototypes is data-driven, which cannot control the location of class prototypes~\cite{ming2023cider}. Based on this, we argue that binding category features to corresponding basis vectors to enforce orthogonality is essential for maintaining distinct boundaries and minimizing interference between different classes. By ensuring $90^\circ$ angular separation between basis vectors, each category can occupy a unique region in the feature space. Furthermore, this component explicitly defines the placement of new class embeddings, thereby preventing potential reductions in inter-class separation that could result from diminished model plasticity.
 
We design an incremental generator that produces predefined, mutually orthogonal basis vectors through a data-independent process. Let $\{\mathbf{\mu}_c^{\mathrm{old}}\}_{c=1}^{C^{\mathrm{old}}}$ be the existing set of $C^{\mathrm{old}}$ basis vectors, each normalized to unit length and mutually orthogonal. When new classes $C^{(t)}$ of task $t$ arrive, each new generated vector $\{\mathbf{\mu}_k^{(t)}\}_{k=1}^{C^{(t)}}$ is also normalized and required to be orthogonal to both existing and newly added vectors. To approximately satisfy these constraints, let $M^* = \{\mathbf{\mu}_c^{\mathrm{old}}, \mathbf{\mu}_k^{(t)}\}$ represent all basis vectors after incrementally adding new classes, and define the following objective:

\begin{equation} \label{ebv}
    M^{*} = \arg\min_{M} \Biggl[
    \sum_{i,j=1}^{C^{(t)}}
    |{\mathbf{\mu}_i^{(t)}}^\top \mathbf{\mu}_j^{(t)}| +
    \sum_{k=1}^{C^{(t)}}\sum_{c=1}^{C^{\mathrm{old}}}
    |{\mathbf{\mu}_k^{(t)}}^\top \mathbf{\mu}_c^{\mathrm{old}}|
\Biggr],
\end{equation}
where each pairwise inner product deviating from zero is penalized. Minimizing these penalty terms encourages all basis vectors to remain mutually orthogonal, thereby providing explicitly defined embedding guidance. 

Having obtained the necessary vectors, we should now concentrate on embedding features in proximity to these basis vectors. Since these basis vectors are distributed on the unit hypersphere, we can model the embedding effectively using the von Mises-Fisher (vMF) distribution~\cite{mardia2009directional}. The vMF distribution serves as a spherical counterpart to Gaussian distributions, designed for unit norm embeddings $\mathbf{z}$ where $\|\mathbf{z}\|^2 = 1$. The probability density function of a unit vector $\mathbf{z} \in \mathbb{R}^d$ belonging to class $k$ in task $t$ is defined as:

\begin{equation} \label{vMF}
p_d(\mathbf{z}^{(t)}; \mu_k^{(t)}, \kappa) = Z_d(\kappa) \exp\left(\kappa  \mathbf{z}^{(t)} {\mu_k^{(t)}}\right),
\end{equation}

\noindent where $\mu_k^{(t)}$ is the generated basis vector for class $k$ in $M^{*}$, $\kappa \geq 0$ represents the concentration parameter controlling the distribution tightness, and $Z_d(\kappa)$ is the normalization factor. The optimized normalized probability of assigning an embedding $\mathbf{z}_{i}^{(t)}$ to category $c_{(i)}$ is given as follows:

\begin{equation} \label{LossIOE}
\mathcal{L}_{\text {IOE}}=-\frac{1}{N^{(t)}} \sum_{i=1}^{N^{(t)}} \log \frac{\exp \left({\mathbf{z}_{i}^{(t)}} {\mu_{c_{(i)}}^{(t)}} / \tau_{\text{IOE}}\right)}{\sum_{j=1}^{C^{(t)}} \exp \left({\mathbf{z}_{i}^{(t)}} {\mu}_{j}^{(t)} / \tau_{\text{IOE}}\right)},
\end{equation}

\noindent  where $c_{(i)}$ denotes the class index of a sample $x_{i}$ in task $t$, $\tau_{\text{IOE}}$ is the fixed temperature. Combining Eqs. (\ref{ebv}) and (\ref{LossIOE}), IOE successfully embeds categories orthogonally on the unit hypersphere. Following previous researches~\cite{zhu2021prototype,Kim2022CLOM,toldo2022bring,magistri2024EFC}, we also employed self-rotation augmentation. It is crucial to highlight that IOE communicates an important information to each task: to precisely delineate inter-class separation and minimize overlap. This information interaction facilitates superior feature discrimination.


\begin{table*}[ht]
\vskip -0.05in
\centering
\begin{center}
\resizebox{0.87\textwidth}{!}{
\begin{tabular}{lcccccccccccc}
\toprule
\multirow{3}{*}{\textbf{Method}} & \multicolumn{4}{c}{\textbf{CIFAR-100}} & \multicolumn{4}{c}{\textbf{Tiny-ImageNet}} & \multicolumn{4}{c}{\textbf{ImageNet-Subset}} \\
& \multicolumn{2}{c}{Split-10} & \multicolumn{2}{c}{Split-20} & \multicolumn{2}{c}{Split-10} & \multicolumn{2}{c}{Split-20} & \multicolumn{2}{c}{Split-10} & \multicolumn{2}{c}{Split-20} \\
\cmidrule(lr){2-3} \cmidrule(lr){4-5} \cmidrule(lr){6-7} \cmidrule(lr){8-9} \cmidrule(lr){10-11} \cmidrule(lr){12-13}
 & $A_{\text{inc}}$ & $A_{\text{last}}$ & $A_{\text{inc}}$ & $A_{\text{last}}$ & $A_{\text{inc}}$ & $A_{\text{last}}$ & $A_{\text{inc}}$ & $A_{\text{last}}$ & $A_{\text{inc}}$ & $A_{\text{last}}$ & $A_{\text{inc}}$ & $A_{\text{last}}$ \\
\midrule
EWC & 49.14 & 31.17 & 31.02 & 17.37 & 24.01 & 8.00 & 15.70 & 5.16 & 39.40 & 24.59 & 26.95 & 12.78 \\
LwF & 53.91 & 32.80 & 38.39 & 17.44 & 45.14 & 26.09 & 32.94 & 15.02 & 56.41 & 37.71  & 40.23 & 18.64 \\
PASS & 47.86 & 30.45 & 32.86 & 17.44 & 39.25 & 24.11 & 32.01 & 18.73 & 45.74 & 26.40 & 31.65 & 14.38 \\
FeTrIL & 51.20 & 34.94 & 38.48 & 23.28 & 45.60 & 30.97 & 39.54 & 25.70 & 52.63 & 36.17 & 42.43 & 26.63 \\
SSRE & 47.26 & 30.40 & 32.45 & 17.52 & 38.82 & 22.93 & 30.62 & 17.34 & 43.76 & 25.42 & 31.15 & 16.25 \\
EFC & 58.58 & 43.62 & 47.36 & 32.15 & \textit{47.95} & 34.10 & \textit{42.07} & \textit{28.69} & 59.94 & 47.38 & 49.92 & 35.75 \\
LDC & 59.50 & 45.40 & - & - & 46.80 & \textit{34.20} & - & - & \textit{69.40} & 51.40 & - & - \\
ADC & 61.35 & 46.48 & - & - & 43.04 & 32.32 & - & - & 67.07 & 46.58 & - & - \\
SEED & \textit{62.04} & \textit{51.42} & \textit{57.42} & \textit{42.87} & - & - & - & - & 67.55 & \textit{55.17} & \textit{62.26} & \textit{45.77} \\
\midrule
\multirow{2}{*}{\textbf{DCNet}}
 & \textbf{75.84} & \textbf{65.40} & \textbf{71.52} & \textbf{58.43} & \textbf{57.00} & \textbf{48.37} & \textbf{50.05} & \textbf{36.75} & \textbf{76.82} & \textbf{67.82} & \textbf{69.12} & \textbf{50.31} \\
 
 & ±0.52 & ±0.26 & ±0.42 & ±0.36 & ±0.22 & ±0.33 & ±0.10 & ±0.29 & ±0.25 & ±0.22 & ±0.43 & ±0.53 \\
\bottomrule
\end{tabular}
}
\end{center}
\vskip -0.1in
\caption{Comparison with baselines on Split CIFAR-100, Tiny-ImageNet, and ImageNet-Subset. All methods are trained from scratch without using replay samples. Our method is evaluated over five runs, with the mean performance and standard deviation reported. We emphasize the optimal results in bold and denote the sub-optimal results in italics.}

\vskip -0.2in
\label{Table_EF-CIL}
\end{table*}

\subsection{Dynamic Aggregation Compensation (DAC)}\label{DAC}
DAC focuses on the degree of intra-class aggregation. Specifically, as model plasticity decreases, the embeddings for subsequent tasks tend to become increasingly diffuse. To counteract this, DAC employs the aggregation patterns from previous tasks as a template, dynamically adjusting pressure to maintain consistent aggregation. We introduce and minimize an adaptive supervised contrastive loss to compensate for inadequate aggregation after the IOE has undergone a predefined number of training iterations:

\begin{align} \label{LossDAC}
  \mathcal{L}_{\text{DAC}} =
& -\frac{1}{N^{(t)}} \sum_{i=1}^{N^{(t)}} \frac{1}{|P(i)^{(t)}|} \times \notag \\
& \sum_{p \in P(i)^{(t)}} \log 
  \frac{\exp(\mathbf{z}_i^{(t)} \cdot \mathbf{z}_p^{(t)} / \tau^{(t)})}{\sum_{j=1, j \neq i}^N \exp(\mathbf{z}_i^{(t)} \cdot \mathbf{z}_j^{(t)} / \tau^{(t)})},
\end{align}

\noindent where $P(i)^{(t)}$ is the set of positive samples for sample $x_{i}$, $\mathbf{z}_i^{(t)}$ and $\mathbf{z}_p^{(t)}$ are the embedding representations of sample $x_{i}$ and its positive sample $x_{p}$, $\tau^{(t)}$ is an adaptive temperature that controls the compensation intensity. Adjusting temperature to optimize training for a single task is common; however, DAC aims to leverage information from incremental tasks to select the optimal $\tau^{(t)}$ for precise compensation. Indeed, employing fixed hyperparameters in CIL is suboptimal, as appropriate adjustments are necessary based on factors such as task complexity~\cite{semola2024adaptive,li2024harnessing}.

Naturally, we can estimate the concentration parameter $\kappa$ based on the distributional form of Eq. (\ref{vMF}), and subsequently employ this estimate as a degree of aggregation. However, due to the presence of the Bessel function, an analytic solution for $\kappa$ is not feasible. In DAC, we adopt a more intuitive approach by calculating the average cosine similarity between samples and basis vector to quantify the degree of aggregation $\omega^{(t)}$. This measure is then used to dynamically adjust the temperature relative to the historical average degree of aggregation $\omega^{\text {avg}}$:

\begin{equation} \label{temp}
\omega^{(t)} = \frac{1}{N^{(t)}} \sum_{i=1}^{N^{(t)}} \mathbf{z}_i^{(t)} \cdot \mu_{c_{(i)}}^{(t)}, \;\; \tau^{(t)} = \tau^{(0)} \cdot \frac{\omega^{(t)}}{\omega^{\text {avg}}}.
\end{equation}

DAC calculates the aggregation degree $\omega^{(t)}$ via Eq. (\ref{temp}) over a specified epoch and subsequently updates the temperature $\tau^{(t)}$ for Eq. (\ref{LossDAC}). If the aggregation degree of the current task is insufficient, DAC applies a lower temperature to enhance the embedding intensity; conversely, if the aggregation degree is adequate, the constraint is relaxed appropriately. This balancing mechanism uses the degree of intra-class aggregation as information to flow between tasks, thereby ensuring that each class maintains a consistent and compact embedding throughout the incremental learning process.

In summary, IOE and DAC enhance the inter-class separation and align the intra-class aggregation, constructing a discriminative and consistent feature space through the interaction between tasks. The final optimization objective comprises three components:

\begin{equation} \label{Loss}
\mathcal{L}_{\text{Total}} = \mathcal{L}_{\text{IOE}} + \lambda\cdot\mathcal{L}_{\text{DAC}} + \lambda_{\text{HAT}}\cdot\mathcal{L}_{\text{HAT}},
\end{equation}

\noindent where $\lambda$ and $\lambda_{\text{HAT}}$ are both hyper-parameters used to balance the total loss. The detailed algorithm for HAT and the procedure for DCNet are provided in Appendix~\ref{Algorithm}.

\section{Experiment}
\subsection{Experiment Setting}

\textbf{Datasets.} 
For a fair comparison with baselines, we utilize three widely adopted datasets in CIL. The CIFAR-100~\cite{CIFAR-100} comprises 50k training images and 10k test images, each sized 32×32 pixels, spanning 100 categories. The Tiny-ImageNet~\cite{Tiny-ImageNet-200}, a subset of ImageNet, includes 100k training images and 10k test images, each sized 64×64 pixels, covering 200 categories. The ImageNet-Subset is a subset of the ImageNet (ILSVRC 2012) ~\cite{russakovsky2015imagenet} with 100 categories, containing approximately 130k training images, each sized 224×224 pixels. We split these datasets equally into 10-task and 20-task sequences. This experimental setup is more challenging and realistic because it does not rely on a large initial task.

\textbf{Baselines.} 
Since our focus is on the EF-CIL scenario, we conduct comprehensive comparisons with both classical and state-of-the-art EF-CIL methods: \textbf{EWC}~\cite{PNAS2017EWC}, \textbf{LwF}~\cite{li2017learning}, \textbf{PASS}~\cite{zhu2021prototype}, \textbf{FeTrIL}~\cite{petit2023fetril}, \textbf{SSRE}~\cite{zhu2023SSRE}, \textbf{EFC}~\cite{magistri2024EFC}, \textbf{LDC}~\cite{gomez2025LDC}, \textbf{ADC}~\cite{goswami2024ADC}, \textbf{SEED}~\cite{rypesc2023SEED}. Furthermore, to substantiate the effectiveness of our approach, we also compare it with several state-of-the-art exemplar-based methods, particularly various TIL+OOD approaches, including: \textbf{iCaRL}~\cite{rebuffi2017icarl}, \textbf{DER++}~\cite{buzzega2020DER++}, \textbf{DER}~\cite{yan2021dynamically}, \textbf{FOSTER}~\cite{wang2022foster}, \textbf{BEEF}~\cite{wang2023beef}, \textbf{MORE}~\cite{kim2022MORE}, \textbf{ROW}~\cite{kim2023ROW}, \textbf{TPL}~\cite{lin2024TPL}.

\textbf{Training.} 
We employ a ResNet-18 model~\cite{he2016deep} trained from scratch for all experiments. For comparison baselines, we either reproduce the results using the hyperparameters specified in their source code repositories or directly adopt the existing results in state-of-the-art baselines. To ensure a fair comparison, we also allow the baselines to utilize self-rotation augmentation~\cite{magistri2024EFC}. For CIFAR-100 and Tiny-ImageNet, consistent with prior work~\cite{Kim2022CLOM}, we utilize LARS~\cite{you2017large} training for 700 epochs with an initial learning rate of 0.1, introducing the DAC component at epoch 400. For ImageNet-Subset, we train for 100 epochs, incorporating DAC at epoch 50. In all the experiments, we set the $\tau_{\text{IOE}} = 0.05$, $\tau^{(0)} = 0.2$ and configure the dimension of basis vector be 256.

\textbf{Evaluation.} 
We report two key metrics: $A_{\text{last}}$(\%), which represents the average accuracy after the last task; $A_{\text{inc}}$(\%), which denotes the average incremental accuracy across all tasks. For further details on training and metric calculations, please refer to Appendix~\ref{Training_Details}.

\subsection{Main Comparison Results}

\textbf{Comparison with EF-CIL Approaches.} 
Table \ref{Table_EF-CIL} offers a comprehensive comparison of various baselines across three standard benchmark datasets. All methods were trained from scratch without utilizing any replay samples. Our method demonstrates significant performance among all baselines with non-marginal improvements. Specifically, compared to the second-best method, DCNet achieves improvements of 11.01\% and 11.68\% in average and final accuracy metrics, respectively. This underscores the superior competitiveness of DCNet within EF-CIL. This outstanding performance can be attributed to our improvement of the TIL+OOD framework, which fundamentally differs from the traditional EF-CIL approach and effectively overcomes the isolation between tasks.

\begin{table}[ht]
\vskip -0.05in 
\begin{center}
	\begin{small}
        \scalebox{0.9}{
	\begin{tabular}{lccccccc}
        \toprule
        \textbf{Method} & \textbf{$\mathcal{M}$} & \textbf{CIFAR100} & \textbf{Tiny-ImageNet} & \textbf{ImageNet-Subset}\\
        \midrule
        iCaRL &  & 51.4 & 28.3 & 50.98 \\
        DER++ &  & 53.7 & 30.5 & - \\
        DER &  & \textit{64.5} & 38.3 & 66.85 \\
        FOSTER &  & 62.5 & 36.4 & 67.68 \\
        BEEF & 2k & 60.9 & 37.9 & \textbf{68.78} \\
        MORE$^\dagger$ &  & 57.5 & 35.4 & - \\
        ROW$^\dagger$ &  & 58.2 & 38.2 & - \\
        TPL$^\dagger$ &  & 62.2 & \textit{42.9} & - \\
         \midrule
        \textbf{DCNet} & 0 & \textbf{65.4} & \textbf{48.4} & \textit{67.82} \\
        \bottomrule
        \end{tabular}
		}
        \end{small}
\end{center}
\vskip -0.1in 
\caption{Comparison with exemplar-based baselines, where we report the average accuracy denoted as $A_{\text{last}}$. $^\dagger$: These methods also adopt the same TIL+OOD framework.}
\label{Table_Additional_Analysis}
\end{table}

\textbf{Comparison with Exemplar-based Approaches.}
Table \ref{Table_Additional_Analysis} presents a comparative analysis of our approach against several state-of-the-art exemplar-based methods. All baselines train from scratch while maintaining a buffer $\mathcal{M}$ of 2000 samples. DCNet achieves a performance improvement of 0.9\% and 5.5\% over these advanced methods on two 10 tasks sequences. In the ImageNet-Subset-split10 task, our method slightly underperforms the state-of-the-art baseline. It is important to note that maintaining a large buffer is an effective strategy for complex tasks. Specifically, buffering 297 samples from ImageNet consumes memory equivalent to that of a ResNet-18 backbone network. Therefore, a fair comparison should account for these resource requirements~\cite{zhou2022model}. Furthermore, we highlight that MORE, ROW, and TPL are also TIL+OOD methods that explicitly use replay samples to break task isolation. In contrast, our approach constructs the interaction of information among tasks without replay samples, and achieves performance that is competitive with exemplar-based methods.


\begin{table}[t]
\vskip -0.05in 

\begin{center}

\scalebox{0.9}{
\begin{tabular}{lcccccc}
\toprule
\textbf{Component} & \textbf{$\lambda$} &\multicolumn{2}{c}{\textbf{CIFAR100-10}} & \multicolumn{2}{c}{\textbf{ImageNet100-10}} \\
\cmidrule(lr){3-4} \cmidrule(lr){5-6}
& & $A_{\text{inc}}$ & $A_{\text{last}}$ & $A_{\text{inc}}$ & $A_{\text{last}}$ \\
\midrule
HAT+CSI &- & 73.30 & 63.32 & 70.80 & 63.94 \\
IOE      & -  & 73.85 & 63.80 & 73.55 & 64.86 \\
IOE+DAC$^\dagger$  & 1.0   & 74.04 & 64.40 & 75.09 & 65.80 \\
IOE+DAC$^{\dagger\dagger}$ & 0.5 & 75.49 & 65.27 & 76.19 & 66.68 \\
IOE+DAC$^{\dagger\dagger}$ & 1.0 & \textbf{75.84} & \textbf{65.40} & \textbf{76.82} & \textbf{67.82} \\
IOE+DAC$^{\dagger\dagger}$ & 2.0 & 75.58 & 64.92 & 75.64 & 65.92 \\
\bottomrule
\end{tabular}
}

\end{center}
\vskip -0.1in 
\caption{Effectiveness of the core designs in our DCNet. HAT+CSI serves as the foundation for our approach. $^\dagger$: DAC component with fixed temperature; $^{\dagger\dagger}$: DAC component with dynamic temperature.}
\label{tab:comparison}
\vskip -0.1in 
\end{table}


\begin{figure}[t]
\centering
\subfigure[]{
\includegraphics[width=1.1in]{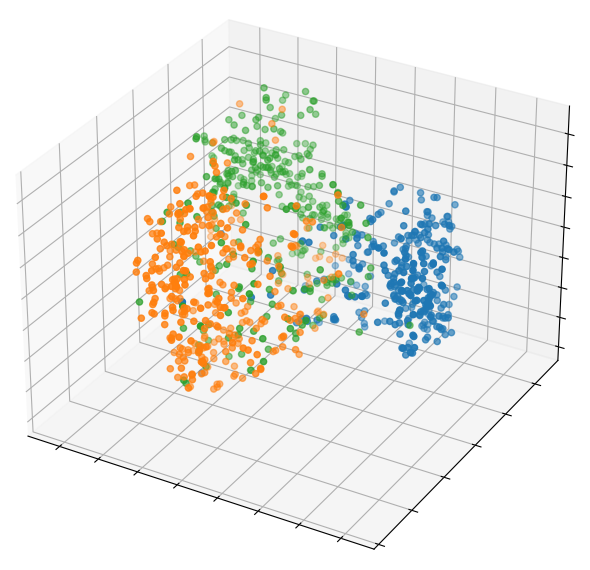}}
\subfigure[]{
\includegraphics[width=1.1in]{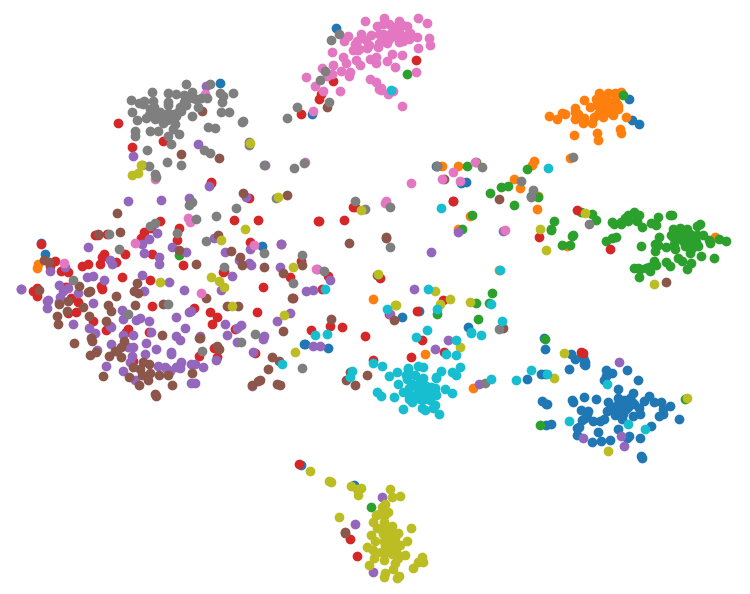}}

\vspace{-2mm}
\subfigure[]{
\includegraphics[width=1.1in]{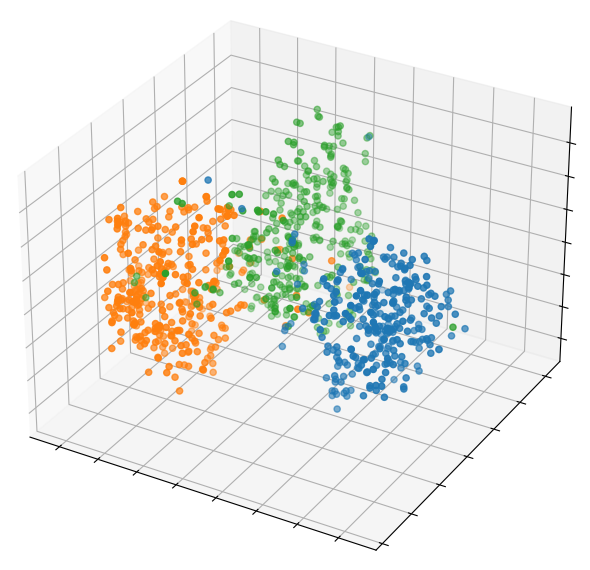}}
\subfigure[]{
\includegraphics[width=1.1in]{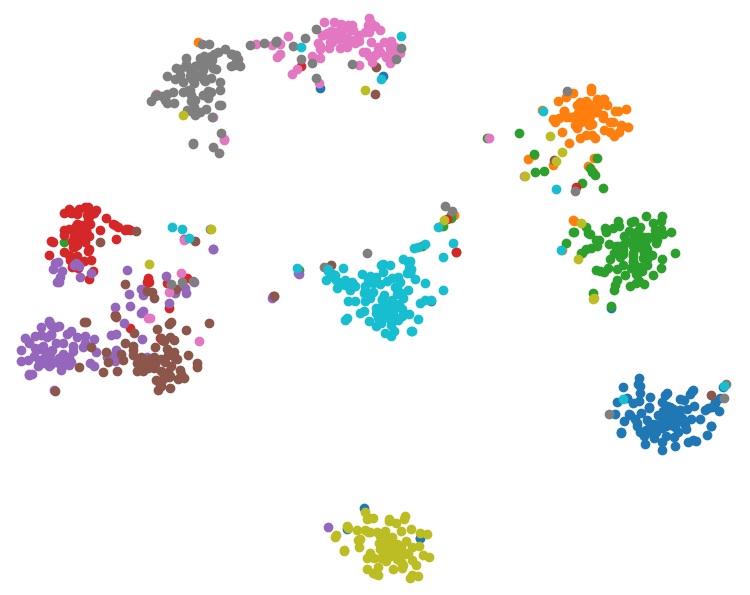}}

\vspace{-2mm}
\caption{t-SNE visualization of the embedding space, where a color represents a category. (a) (b) illustrate the embedding results for HAT+CSI; and (c) (d) present results for our method.} 
\label{figure1}
\end{figure}

\begin{figure}[t]
\vskip -0.02in 
\centering
\makeatletter\def\@captype{figure}\makeatother
\begin{minipage}{0.22\textwidth}
	\begin{center}
		\centerline{\includegraphics[width=\columnwidth, trim=90 50 120 110, clip]{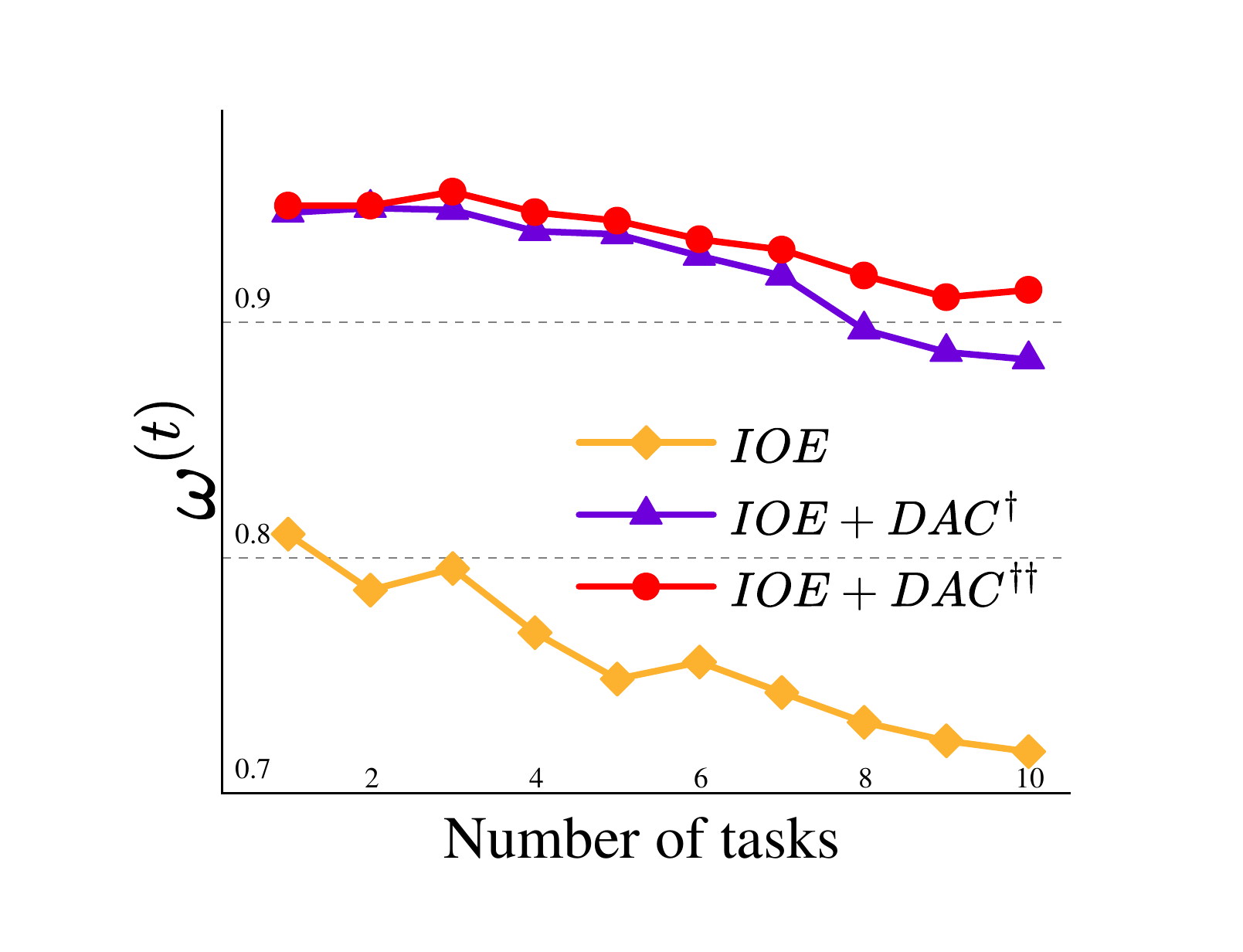}}
		\caption{Changes in the degree of aggregation $\omega^{(t)}$.}
		\label{figure2}
	\end{center}
	\vskip -0.3in
\vskip 0.1in
\end{minipage}
\hspace{0.1in} 
\centering
\makeatletter\def\@captype{figure}\makeatother
\begin{minipage}{0.23\textwidth}
	\begin{center}
		\centerline{\includegraphics[width=\columnwidth, trim=90 50 50 70, clip]{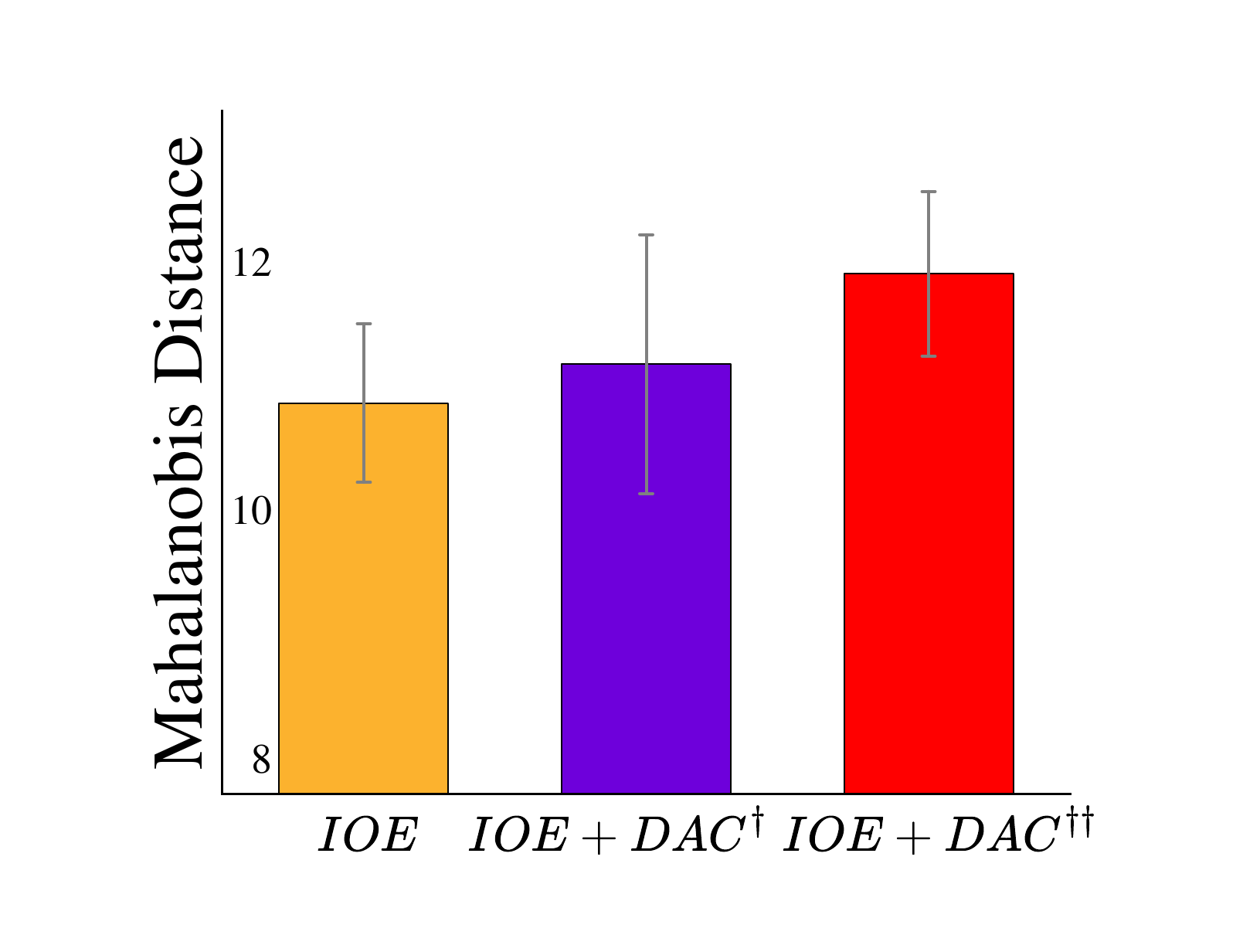}}
        
		\caption{Average Mahalanobis distance between classes.}
		\label{figure3}
	\end{center}
	\vskip -0.3in
\vskip 0.1in
\end{minipage}
\end{figure}

\subsection{Algorithm Analysis} \label{Ablation Study}
We evaluate the empirical effectiveness of DCNet. Table~\ref{tab:comparison} presents the results of the ablation study and parametric analysis. HAT+CSI, a pioneering approach for TIL+OOD, is introduced by Kim et al.~\shortcite{kim2022theoretical} to demonstrate the feasibility of the TIL+OOD framework; however, it does not consider the information interaction between tasks. Consequently, HAT+CSI can be considered a precursor to DCNet.

Building on this, Table~\ref{tab:comparison} provides results for three configurations: IOE, using only the IOE component; IOE+DAC$^\dagger$, combining the fixed DAC component with IOE; IOE+DAC$^{\dagger\dagger}$, combining the dynamic DAC component with IOE using different $\lambda$ for Eqs. (\ref{Loss}). The results indicate that our methods are interdependent and achieve superior performance by fully utilizing available information. It is noteworthy that the performance gains of DCNet are particularly pronounced in the more complex ImageNet-Subset task.

Figure~\ref{figure1} visualizes the feature space of a single task, where (a) (b) present the results of HAT+CSI and (c) (d) represent our method. Each group visualizes three distinct classes in a three-dimensional space and an entire task in a two-dimensional space. Attribute to the superior inter-class separation by IOE, the feature space of DCNet is more discriminative. Figure~\ref{figure2} illustrates the decreasing trend of intra-class aggregation $\omega^{(t)}$ as tasks progress, due to diminished model plasticity. To counteract this effect, the DAC component dynamically adjusts the compensation intensity based on the degree of aggregation from previous tasks, thereby stabilizing the change curve of intra-class aggregation. Finally, Figure~\ref{figure3} presents the average inter-class Mahalanobis distance across the incremental sequence. By leveraging the two components, DCNet constructs a feature space that is both discriminative and consistent.

\section{Conclusion}
We introduce a novel approach of TIL+OOD framework in the context of EF-CIL, leveraging information from incremental sequence to overcome task isolation. Theoretical analysis reveals that inter-class separation and intra-class aggregation are crucial for effective OOD detection in an incremental learning sequence. Our proposed DCNet, informed by these insights, preserves the discriminability and consistency of the feature space via its IOE and DAC components. Extensive experiments validate the competitiveness of our method. Future research could explore scenarios with blurry task boundaries, where models need to be aware of new class arrivals during training.
\newpage
\bibliographystyle{named}
\bibliography{ijcai25}

\newpage
\appendix
\onecolumn
%
%

\section{Proof of Theorem \ref{thm:extended}} \label{Proof_Theorem_2}

Recalling Eqs.~\ref{ES(x)} and \ref{D}, our objective is to analyze the factors influencing $D$, which quantifies the performance of OOD detection under incremental sequences. First, we prove the following lemma, which elucidates the factors contributing to the lower bound.

\begin{lemma}
\label{lower bound}
Let
\(\alpha_{i,t} := 
\tfrac{1}{2}\, d_M\!\bigl(\mu_{\mathrm{in},i}, \mu_{\mathrm{out},t}\bigr),\; i = 1,\ldots, k,\; t = 1,\ldots, T\), then we have the following estimate,
\[
\mathbb{E}_{x \sim P^{\mathrm{out}}_\chi}(ES(x))
\leq \frac{1}{T} \sum_{t=1}^T \sum_{i=1}^k(
(1 - P^{\mathrm{out}}_\chi(B_{\alpha_{i,t}}(\mu_{\mathrm{out},t}))) + \exp\left(-\frac{\alpha_{i,t}^2}{2}\right)),
\]
$B_{r}(u)$ denotes the open ball of radius $r$ in the Mahalanobis distance, centered at $u$. $\mu_{\mathrm{in},i}$, $\mu_{\mathrm{out},t}$ can have arbitrary configurations.


\end{lemma}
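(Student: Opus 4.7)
The plan is to control $\mathbb{E}_{x \sim P^{\mathrm{out}}_\chi}[ES(x)]$ by splitting the score $ES(x) = \sum_{i=1}^k \exp\bigl(-\tfrac{1}{2}\, d_M^2(x, \mu_{\mathrm{in},i})\bigr)$ into its $k$ Gaussian pieces and, for each piece, comparing $x$ to a Mahalanobis ball around $\mu_{\mathrm{out},t}$. For a fixed class index $i$ and any choice of task index $t$, I would partition the ambient space into the open ball $B_{\alpha_{i,t}}(\mu_{\mathrm{out},t})$ and its complement, and bound the integrand on each region separately.

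On the ball $\{x : d_M(x,\mu_{\mathrm{out},t}) < \alpha_{i,t}\}$, the triangle inequality for the Mahalanobis metric combined with the identity $d_M(\mu_{\mathrm{in},i},\mu_{\mathrm{out},t}) = 2\alpha_{i,t}$ forces $d_M(x,\mu_{\mathrm{in},i}) > \alpha_{i,t}$, so the integrand is at most $\exp(-\alpha_{i,t}^2/2)$. Off the ball I use the trivial bound $\exp\bigl(-\tfrac{1}{2}\, d_M^2(x,\mu_{\mathrm{in},i})\bigr) \le 1$. Integrating against $P^{\mathrm{out}}_\chi$ gives, for every $t$,
\begin{align*}
\mathbb{E}_{x \sim P^{\mathrm{out}}_\chi}\!\Bigl[\exp\bigl(-\tfrac{1}{2}\, d_M^2(x,\mu_{\mathrm{in},i})\bigr)\Bigr]
&\le \exp(-\alpha_{i,t}^2/2)\, P^{\mathrm{out}}_\chi\bigl(B_{\alpha_{i,t}}(\mu_{\mathrm{out},t})\bigr) + \bigl(1 - P^{\mathrm{out}}_\chi\bigl(B_{\alpha_{i,t}}(\mu_{\mathrm{out},t})\bigr)\bigr) \\
&\le \exp(-\alpha_{i,t}^2/2) + \bigl(1 - P^{\mathrm{out}}_\chi\bigl(B_{\alpha_{i,t}}(\mu_{\mathrm{out},t})\bigr)\bigr),
\end{align*}
where the last relaxation uses only $P^{\mathrm{out}}_\chi(B) \le 1$.

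Since the left-hand side does not depend on $t$, it is also bounded by the average of the right-hand side over $t \in \{1,\dots,T\}$. Summing that averaged estimate over $i \in \{1,\dots,k\}$ and swapping the order of summation yields exactly the claim of Lemma~\ref{lower bound}. I expect the principal subtlety to be bookkeeping rather than analysis: first, keeping $\alpha_{i,t}$ pinned at half of $d_M(\mu_{\mathrm{in},i},\mu_{\mathrm{out},t})$ so that the triangle inequality saturates cleanly inside the ball; and second, remembering that the integration is against the full mixture $P^{\mathrm{out}}_\chi$ rather than against the individual component $\mathcal{N}(\mu_{\mathrm{out},t},\Sigma)$, which is exactly what allows the ball probability $P^{\mathrm{out}}_\chi(B_{\alpha_{i,t}}(\mu_{\mathrm{out},t}))$ to appear unchanged in the final statement. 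Beyond these points, the argument is purely geometric and introduces no ingredient beyond the definitions already in place.
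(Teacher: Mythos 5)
Your argument is correct and follows essentially the same route as the paper's proof: split each component $ES_i$ over the Mahalanobis ball $B_{\alpha_{i,t}}(\mu_{\mathrm{out},t})$ and its complement, use the triangle inequality with $d_M(\mu_{\mathrm{in},i},\mu_{\mathrm{out},t})=2\alpha_{i,t}$ to bound the integrand by $\exp(-\alpha_{i,t}^2/2)$ on the ball and by $1$ off it, relax $P^{\mathrm{out}}_\chi(B)\le 1$, then average over $t$ and sum over $i$. The only difference is presentational (you average over $t$ at the end rather than decomposing the mixture up front), so nothing further is needed.
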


\begin{proof} 
We have,
\[
\mathbb{E}_{x \sim P^{\mathrm{out}}_\chi}(ES_i(x)) =
\int_{\mathbb{R}^d} ES_i(x)p^{\mathrm{out}}_\chi(x) dx =
\frac{1}{T} \sum_{t=1}^T \int_{B_{\alpha_{i,t}}(\mu_{\mathrm{out},t})} ES_i(x)p^{\mathrm{out}}_\chi(x) dx + 
\frac{1}{T} \sum_{t=1}^T \int_{B_{\alpha_{i,t}}^c(\mu_{\mathrm{out},t})} ES_i(x)p^{\mathrm{out}}_\chi(x) dx.
\]
Next, for $x \in B_{\alpha_{i,t}}(\mu_{\mathrm{out},t})$, by the triangle inequality and the given definition, we have,
\[
d_M(\mu_{\mathrm{out},t}, x) + d_M(x, \mu_{\mathrm{in},i}) \geq d_M(\mu_{\mathrm{out},t}, \mu_{\mathrm{in},i}),
\]
\[
d_M(x, \mu_{\mathrm{in},i}) \geq \frac{1}{2}d_M(\mu_{\mathrm{in},i}, \mu_{\mathrm{out},t}).
\]
So for the first term, we have,
\[
\frac{1}{T} \sum_{t=1}^T \int_{B_{\alpha_{i,t}}(\mu_{\mathrm{out},t})} ES_i(x)p^{\mathrm{out}}_\chi(x) dx  \leq \frac{1}{T} \sum_{t=1}^T \exp\left(-\frac{1}{2}\alpha_{i,t}\right)P^{\mathrm{out}}_\chi(B_{\alpha_{i,t}}(\mu_{\mathrm{out},t})).
\]
For the second term, since $ES_i(x) \leq 1$, we have,
\[
\frac{1}{T} \sum_{t=1}^T \int_{B_{\alpha_{i,t}}^c(\mu_{\mathrm{out},t})} ES_i(x)p^{\mathrm{out}}_\chi(x) dx \leq \frac{1}{T} \sum_{t=1}^T (1 - P^{\mathrm{out}}_\chi(B_{\alpha_{i,t}}(\mu_{\mathrm{out},t}))).
\]
Putting all together, we have,
\[
\mathbb{E}_{x \sim P^{\mathrm{out}}_\chi}ES_i(x) \leq \frac{1}{T} \sum_{t=1}^T(
(1 - P^{\mathrm{out}}_\chi(B_{\alpha_{i,t}}(\mu_{\mathrm{out},t}))) + \exp\left(-\frac{\alpha_{i,t}^2}{2}\right)P^{\mathrm{out}}_\chi(B_{\alpha_{i,t}}(\mu_{\mathrm{out},t}))).
\]
Considering different $\mu_{\mathrm{in},i}$, we have,
\[
\mathbb{E}_{x \sim P^{\mathrm{out}}_\chi}(ES(x)) \leq \frac{1}{T} \sum_{t=1}^T \sum_{i=1}^k(
(1 - P^{\mathrm{out}}_\chi(B_{\alpha_{i,t}}(\mu_{\mathrm{out},t}))) + \exp\left(-\frac{\alpha_{i,t}^2}{2}\right)P^{\mathrm{out}}_\chi(B_{\alpha_{i,t}}(\mu_{\mathrm{out},t}))).
\]
As $P^{\mathrm{out}}_\chi(B_{\alpha_{i,t}}(\mu_{\mathrm{out},t})) \leq 1$, we have,
\[
\mathbb{E}_{x \sim P^{\mathrm{out}}_\chi}(ES(x)) \leq \frac{1}{T} \sum_{t=1}^T \sum_{i=1}^k(
(1 - P^{\mathrm{out}}_\chi(B_{\alpha_{i,t}}(\mu_{\mathrm{out},t}))) + \exp\left(-\frac{\alpha_{i,t}^2}{2}\right)).
\]
This completes the proof.
\end{proof}

Before analyzing the upper bound of $D$, we first recall the Total Variation.

\begin{lemma}\label{Total Variation} (Total Variation.) Let $P_1, P_2 \in P(\mathcal{X})$. The Total Variation is defined as:
\[
\delta(P_1, P_2) = \sup_{A \in \mathcal{B}} |P_1(A) - P_2(A)|,
\]
where $\mathcal{B}$ denotes the Borel $\sigma$-algebra on $\mathcal{X}$. Furthermore, let $\mathcal{F}$ denote the unit ball in $L^\infty(\mathcal{X})$,
\[
\mathcal{F} := \{f \in L^\infty(\mathcal{X}) \mid \|f\|_\infty \leq 1\}.
\]
Then, the total variation distance can be equivalently characterized as:
\[
\delta(P_1, P_2) = \sup_{f \in \mathcal{F}} \left| \mathbb{E}_{x \sim P_1} f(x) - \mathbb{E}_{x \sim P_2} f(x) \right|.
\]
\end{lemma}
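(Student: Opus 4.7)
The plan is to prove the equivalence by establishing both inequalities, using the Hahn--Jordan decomposition of the signed measure $\mu := P_1 - P_2$ as the central tool. Since $P_1, P_2$ are probability measures, $\mu$ is a finite signed measure with $\mu(\mathcal{X}) = 0$, and Hahn--Jordan furnishes a measurable partition $\mathcal{X} = B \sqcup B^c$ on which $\mu$ is respectively nonnegative and nonpositive. This partition will serve as the bridge between indicator functions (the set-based side) and general bounded measurable functions (the functional side).

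First I would establish the easy direction $\sup_{A \in \mathcal{B}} |P_1(A) - P_2(A)| \leq \sup_{f \in \mathcal{F}} |\mathbb{E}_{x \sim P_1} f(x) - \mathbb{E}_{x \sim P_2} f(x)|$. For any measurable $A$, the indicator $\mathbf{1}_A$ belongs to $\mathcal{F}$ since $\|\mathbf{1}_A\|_\infty \leq 1$, and its $P_i$-expectation is exactly $P_i(A)$. Hence $|P_1(A) - P_2(A)|$ is realized by a particular $f \in \mathcal{F}$, giving the inequality after taking the supremum over $A$.

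For the reverse direction, I would decompose any $f \in \mathcal{F}$ across the Hahn partition:
\[
\mathbb{E}_{x \sim P_1} f(x) - \mathbb{E}_{x \sim P_2} f(x) = \int f \, d\mu = \int_{B} f \, d\mu + \int_{B^c} f \, d\mu.
\]
Since $\mu$ has a fixed sign on each piece and $\|f\|_\infty \leq 1$, the two integrals are controlled by $\mu(B)$ and $-\mu(B^c)$ respectively. The crucial closing observation is that $\mu(\mathcal{X}) = 0$ forces $\mu(B) = -\mu(B^c) = P_1(B) - P_2(B)$, which is itself dominated by $\sup_{A} |P_1(A) - P_2(A)|$. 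Saturation is exhibited by the sign function $f^\ast = \mathbf{1}_B - \mathbf{1}_{B^c}$ (noting $\|f^\ast\|_\infty = 1$, so $f^\ast \in \mathcal{F}$), confirming that the supremum on the right is attained on an indicator-like element.

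The main obstacle is the careful bookkeeping that matches the constant on the $\mathcal{F}$-side with the set-based supremum: the Hahn--Jordan argument naturally yields a bound proportional to $\mu(B) + (-\mu(B^c))$, and collapsing this to a single $\sup_{A} |P_1(A) - P_2(A)|$ relies on the cancellation enforced by $\mu(\mathcal{X}) = 0$. Existence of the Hahn--Jordan decomposition for a finite signed measure on $(\mathcal{X}, \mathcal{B})$ is a standard measure-theoretic fact and would simply be cited rather than reproved.
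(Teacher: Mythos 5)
The paper never proves this lemma---it is ``recalled'' as a standard fact alongside the KL formula and Pinsker's inequality---so there is no in-paper argument to compare against; your attempt has to stand on its own. Your forward direction (indicators lie in $\mathcal{F}$) is fine, but the reverse direction contains a genuine gap, and in fact your own argument exposes that the stated equality is false for $\mathcal{F}$ taken to be the unit ball of $L^\infty$. With $\mu = P_1 - P_2$ and Hahn decomposition $\mathcal{X} = B \sqcup B^c$, your bound is $\bigl|\int f\,d\mu\bigr| \le \mu(B) + \bigl(-\mu(B^c)\bigr)$; the cancellation $\mu(\mathcal{X})=0$ gives $-\mu(B^c) = \mu(B)$, so this sum equals $2\mu(B) = 2\sup_A |P_1(A)-P_2(A)|$ --- it does not ``collapse'' to a single $\sup_A$, it doubles it. Your saturation function $f^\ast = \mathbf{1}_B - \mathbf{1}_{B^c}$ confirms this: $\int f^\ast\,d\mu = \mu(B)-\mu(B^c) = 2\mu(B)$, which strictly exceeds $\sup_A|P_1(A)-P_2(A)| = \mu(B)$ whenever $P_1 \neq P_2$. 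So the two suprema differ by a factor of $2$; the correct identities are $\delta(P_1,P_2) = \tfrac12\sup_{f\in\mathcal{F}}\bigl|\mathbb{E}_{P_1}f - \mathbb{E}_{P_2}f\bigr|$, or equivalently $\delta(P_1,P_2) = \sup_{0\le f\le 1}\bigl|\mathbb{E}_{P_1}f - \mathbb{E}_{P_2}f\bigr|$.

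The fix is to restrict the test class to $[0,1]$-valued functions (or insert the factor $\tfrac12$): for $0\le f\le 1$ one gets $\int f\,d\mu = \int_B f\,d\mu + \int_{B^c} f\,d\mu \le \mu(B) + 0$ and symmetrically $\ge \mu(B^c) = -\mu(B)$, which does yield exactly $\sup_A|P_1(A)-P_2(A)|$, attained at $f=\mathbf{1}_B$. Note that this $[0,1]$ version is also the one the paper actually needs downstream: in the proof of Lemma~\ref{upper bound} the test function is $ES(x)/k \in [0,1]$, and the bound $\mathbb{E}_{P_\chi^{\mathrm{in}}}[ES] - \mathbb{E}_{P_\chi^{\mathrm{out}}}[ES] \le k\,\delta(P_\chi^{\mathrm{in}},P_\chi^{\mathrm{out}})$ is correct under that convention but would acquire a spurious factor of $2$ if one genuinely only knew $\|ES/k\|_\infty \le 1$.
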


Immediately following this, we recall that the Kullback-Leibler (KL) divergence for the multivariate Gaussian distribution.

\begin{lemma}\label{KLD} (Kullback-Leibler Divergence.) Let $P_1 \sim \mathcal{N}(\mu_1, \Sigma)$ and $P_2 \sim \mathcal{N}(\mu_2, \Sigma)$, then we have the following:
\[
KL(P_1 \| P_2) = \frac{1}{2} \left((\mu_1 - \mu_2)^\top \Sigma^{-1} (\mu_1 - \mu_2)\right) = \frac{1}{2} d_M^2(\mu_1, \mu_2).
\]
\end{lemma}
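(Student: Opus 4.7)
The plan is to compute $KL(P_1 \| P_2)$ directly from its integral definition and to exploit the fact that $P_1$ and $P_2$ share the same covariance $\Sigma$. Writing $KL(P_1 \| P_2) = \mathbb{E}_{x \sim P_1}[\log p_1(x) - \log p_2(x)]$ and substituting the multivariate Gaussian densities $p_i(x) = (2\pi)^{-d/2}|\Sigma|^{-1/2}\exp(-\tfrac{1}{2}(x-\mu_i)^\top \Sigma^{-1}(x-\mu_i))$, I observe that the normalizing constants $(2\pi)^{-d/2}|\Sigma|^{-1/2}$ coincide for both laws and therefore cancel inside the logarithm. What remains is $-\tfrac{1}{2}$ times the difference of two quadratic forms in $x$, namely $(x-\mu_1)^\top \Sigma^{-1}(x-\mu_1)$ and $(x-\mu_2)^\top \Sigma^{-1}(x-\mu_2)$.

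The main step is then to evaluate the expectation of this quadratic difference under $P_1$. I would change variables to $y := x - \mu_1$, so that $y \sim \mathcal{N}(0, \Sigma)$ under $P_1$, and expand $(x-\mu_2)^\top \Sigma^{-1}(x-\mu_2) = (y + (\mu_1-\mu_2))^\top \Sigma^{-1} (y + (\mu_1-\mu_2))$. Subtracting $y^\top \Sigma^{-1} y$ from this expansion yields a term linear in $y$, namely $2(\mu_1-\mu_2)^\top \Sigma^{-1} y$ (using symmetry of $\Sigma^{-1}$), plus the constant $(\mu_1-\mu_2)^\top \Sigma^{-1}(\mu_1-\mu_2)$. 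Taking the expectation under $P_1$ kills the linear piece since $\mathbb{E}[y] = 0$, so only the constant survives, yielding $KL(P_1 \| P_2) = \tfrac{1}{2}(\mu_1-\mu_2)^\top \Sigma^{-1}(\mu_1-\mu_2)$.

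The final identification with $\tfrac{1}{2}\, d_M^2(\mu_1, \mu_2)$ is immediate from the definition of the Mahalanobis distance $d_M(u,v) = \sqrt{(u-v)^\top \Sigma^{-1}(u-v)}$ recalled in Subsection~\ref{OOD Detection Capabilities}. I do not foresee a substantive obstacle here: this is a textbook identity, and the shared-covariance assumption strips away the trace and log-determinant terms (of the form $\tfrac{1}{2}(\mathrm{tr}(\Sigma_2^{-1}\Sigma_1) - d + \log(|\Sigma_2|/|\Sigma_1|))$) that would appear in the general unequal-covariance case. The only point warranting care is the symmetry of $\Sigma^{-1}$ when collecting the cross term in the expansion, so that the linear-in-$y$ piece combines cleanly as $2(\mu_1-\mu_2)^\top \Sigma^{-1} y$ and vanishes in expectation; after that the derivation is mechanical.
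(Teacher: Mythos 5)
Your derivation is correct: the normalizing constants cancel because the two Gaussians share $\Sigma$, the cross term is linear in $y=x-\mu_1$ and vanishes under $\mathbb{E}_{P_1}$, and only $\tfrac{1}{2}(\mu_1-\mu_2)^\top\Sigma^{-1}(\mu_1-\mu_2)=\tfrac{1}{2}d_M^2(\mu_1,\mu_2)$ survives. The paper states this lemma as a recalled textbook identity without proof, so there is no alternative argument to compare against; your computation is the standard one and fills the gap soundly.
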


Next, we recall the following inequality that bounds the total variation by KL-divergence.

\begin{lemma}\label{Pinsker's inequality} (Pinsker's Inequality.) Let $P_1, P_2 \in P(\mathcal{X})$, then we have the following:
\[
\delta(P_1, P_2) \leq \sqrt{\frac{1}{2} KL(P_1 \| P_2)}.
\]
Furthermore, the subsequent version, which holds true when KL is large, is also valid:
\[
\delta(P_1, P_2) \leq 1 - \frac{1}{2} \exp\left(-KL(P_1 \| P_2)\right).
\]
\end{lemma}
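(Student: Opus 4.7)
The plan is to establish the two bounds separately using standard information-theoretic techniques. For the first, I would exploit the variational characterization of total variation together with a reduction to the Bernoulli case; for the second, I would route the argument through the Bhattacharyya coefficient.

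For the classical form $\delta(P_1, P_2) \leq \sqrt{\tfrac{1}{2} KL(P_1 \| P_2)}$, the first step is to take $A^{*} = \{x : p_1(x) > p_2(x)\}$, which saturates the supremum in Lemma~\ref{Total Variation} and gives $\delta(P_1, P_2) = P_1(A^{*}) - P_2(A^{*})$. Setting $p = P_1(A^{*})$ and $q = P_2(A^{*})$, I would invoke the data processing inequality for the indicator $\mathbf{1}_{A^{*}}$ to obtain $KL(P_1 \| P_2) \geq KL(\mathrm{Bern}(p) \| \mathrm{Bern}(q))$. The problem then collapses to the scalar Pinsker inequality $KL(\mathrm{Bern}(p) \| \mathrm{Bern}(q)) \geq 2(p - q)^{2}$, which I would verify by defining $h(p) = p \log(p/q) + (1-p)\log((1-p)/(1-q)) - 2(p-q)^{2}$, checking $h(q) = h'(q) = 0$, and observing $h''(p) = \tfrac{1}{p(1-p)} - 4 \geq 0$ since $p(1-p) \leq 1/4$. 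Convexity and the matching boundary conditions force $h \geq 0$, completing the first bound after rearrangement.

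For the sharper form $\delta(P_1, P_2) \leq 1 - \tfrac{1}{2}\exp(-KL(P_1 \| P_2))$, I would use the identity $1 - \delta(P_1, P_2) = \int \min(p_1, p_2)\, dx$ and lower-bound the right-hand side by $\tfrac{1}{2}\exp(-KL(P_1 \| P_2))$ in two steps. Applying the Cauchy–Schwarz inequality to the decomposition $\sqrt{p_1 p_2} = \sqrt{\min(p_1, p_2)}\,\sqrt{\max(p_1, p_2)}$ and then using $\int \max(p_1, p_2)\, dx \leq 2$ yields
\[
\Bigl(\int \sqrt{p_1 p_2}\, dx\Bigr)^{2} \;\leq\; 2 \int \min(p_1, p_2)\, dx .
\]
Next, concavity of the logarithm and Jensen's inequality give
\[
\log \int \sqrt{p_1 p_2}\, dx \;=\; \log \mathbb{E}_{P_1}\bigl[\sqrt{p_2/p_1}\bigr] \;\geq\; \tfrac{1}{2}\, \mathbb{E}_{P_1}\bigl[\log(p_2/p_1)\bigr] \;=\; -\tfrac{1}{2}\, KL(P_1 \| P_2),
\]
so the Bhattacharyya coefficient satisfies $\int \sqrt{p_1 p_2}\, dx \geq \exp(-\tfrac{1}{2} KL(P_1 \| P_2))$. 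Squaring and combining the two displays delivers $1 - \delta(P_1, P_2) \geq \tfrac{1}{2}\exp(-KL(P_1 \| P_2))$, which is the claim.

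The hard part is not analytic but notational: I would need to ensure that the data-processing step in the first bound is phrased for arbitrary probability measures (not just ones with densities), that degenerate cases where $p_1, p_2$ have disjoint support (so $KL = \infty$ and both bounds hold trivially) are handled, and that the Jensen step is applied on the common support under absolute continuity. None of these introduce genuine difficulty; both results are classical, and the proposal is essentially a careful assembly of well-known pieces.
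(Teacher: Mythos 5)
Your proposal is correct, but note that the paper itself offers no proof of this lemma at all: it is stated as a recalled classical fact (and used only as an ingredient in the proof of Lemma~\ref{upper bound}), so there is no in-paper argument to compare against. Both halves of your argument are sound. The first bound follows your standard route — reduction via the maximizing set $A^{*}=\{p_1>p_2\}$ and data processing to the binary case, then the scalar inequality $KL(\mathrm{Bern}(p)\,\|\,\mathrm{Bern}(q))\ge 2(p-q)^2$ established by the convexity computation $h''(p)=\tfrac{1}{p(1-p)}-4\ge 0$ with $h(q)=h'(q)=0$; this matches the normalization $\delta(P_1,P_2)=\sup_A|P_1(A)-P_2(A)|$ used in Lemma~\ref{Total Variation}. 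The second bound is the Bretagnolle--Huber-type inequality, and your derivation via $1-\delta(P_1,P_2)=\int\min(p_1,p_2)\,dx$, Cauchy--Schwarz on $\sqrt{p_1p_2}=\sqrt{\min(p_1,p_2)}\sqrt{\max(p_1,p_2)}$ with $\int\max(p_1,p_2)\,dx\le 2$, and Jensen applied to $\log\mathbb{E}_{P_1}[\sqrt{p_2/p_1}]$ is exactly the standard argument and closes correctly. Your closing caveats (absolute continuity for the Jensen step, the trivial case $KL=\infty$) are the right ones and are easily dispatched. In short: a correct, self-contained proof of a result the paper merely cites.
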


Finally, we prove the upper bound of $D$.
\\
\\
\textbf{Proof of Lemma ~\ref{upper bound}}\label{proof2}
\begin{proof} First, notice that, for $1 \leq i \leq k$,
\[
ES_i(x) \in [0, 1] \implies ES(x) \in [0, k].
\]
Therefore, by Lemma~\ref{Total Variation}, we have,
\[
\mathbb{E}_{x \sim P_\chi^{\mathrm{in}}}(ES(x)) - \mathbb{E}_{x \sim P_\chi^{\mathrm{out}}}(ES(x)) \leq k \cdot \delta(P_\chi^{\mathrm{in}}, P_\chi^{\mathrm{out}}).
\]
Next recall,
\[
p_\chi^{\mathrm{in}}(x) = \frac{1}{k} \sum_{i=1}^k p_i^{\mathrm{in}}(x),\quad
p_\chi^{\mathrm{out}}(x) = \frac{1}{T} \sum_{t=1}^T p_t^{\mathrm{out}}(x).
\]
Therefore, let $P_i$ denote the probability distribution associated with $p_i$. By invoking the triangle inequality and the total variation, we have
\[
\delta\left(P_\chi^{\mathrm{in}}, P_\chi^{\mathrm{out}}\right) = \delta\left(\frac{1}{k} \sum_{i=1}^k P_i^{\mathrm{in}}, \frac{1}{T} \sum_{t=1}^T P_t^{\mathrm{out}}\right) = \sup_{A \subseteq \mathcal{B}} \left| \frac{1}{k} \sum_{i=1}^k P_i^{\mathrm{in}}(A) - \frac{1}{T} \sum_{t=1}^T P_t^{\mathrm{out}}(A) \right|
\]

\[
\leq \frac{1}{T} \frac{1}{k} \sum_{t=1}^T \sum_{i=1}^k \sup_{A \subseteq \mathcal{B}} |P_i^{\mathrm{in}}(A) - P_t^{\mathrm{out}}(A)|
\]

\[
= \frac{1}{T} \frac{1}{k} \sum_{t=1}^T \sum_{i=1}^k \delta(P_i^{\mathrm{in}}, P_t^{\mathrm{out}}).
\]
By Lemma~\ref{KLD} and ~\ref{Pinsker's inequality}, we have,
\[
\delta(P_i^{\mathrm{in}}, P_t^{\mathrm{out}}) \leq \sqrt{\frac{1}{2} KL(P_i^{\mathrm{in}} \| P_t^{\mathrm{out}})} = \frac{1}{2} \sqrt{(\mu_{\mathrm{in},i} - \mu_{\mathrm{out},t})^\top \Sigma^{-1} (\mu_{\mathrm{in},i} - \mu_{\mathrm{out},t})} = \frac{1}{2} d_M(\mu_{\mathrm{in},i}, \mu_{\mathrm{out},t}).
\]
Putting all together, we have
\[
\mathbb{E}_{x \sim P_\chi^{\mathrm{in}}}(ES(x)) - \mathbb{E}_{x \sim P_\chi^{\mathrm{out}}}(ES(x)) \leq \frac{1}{2T} \sum_{t=1}^T \sum_{i=1}^k d_M(\mu_{\mathrm{in},i}, \mu_{\mathrm{out},t}) = \frac{1}{T} \sum_{t=1}^T \sum_{i=1}^k \alpha_{i,t}.
\]
This completes the proof.
\end{proof}

Up to this point, we have established theoretical upper and lower bounds on the performance of OOD detection in incremental tasks. However, within the context of EF-CIL, $\mu_{\mathrm{out},t}$ is not estimable. Consequently, we now introduce the concept of inter-class separation and present the theorem in the main text.
\\
\\
\textbf{Proof of Theorem ~\ref{thm:extended}}\label{proof3}
\begin{proof}
By Lemma~\ref{upper bound}, we obtain an upper bound for $D$ as:
\[
\mathbb{E}_{x \sim P_\chi^{\mathrm{in}}}(ES(x)) - \mathbb{E}_{x \sim P_\chi^{\mathrm{out}}}(ES(x)) \leq \frac{1}{2T} \sum_{t=1}^T \sum_{i=1}^k d_M(\mu_{\mathrm{in},i}, \mu_{\mathrm{out},t}).
\]
Let
\[
i_{0}=\arg \min _{i=1, \ldots, k} d_{M}\left(\mu_{\mathrm{out},t}, \mu_{\mathrm{in}, i}\right),
\]
where $\mu_{\mathrm{in},i_0}$ is the IND mean closest to $\mu_{\mathrm{out}}$. By the triangle inequality, we have:
\[
d_M(\mu_{\mathrm{in},i}, \mu_{\mathrm{out},t}) \leq d_M(\mu_{\mathrm{out},t}, \mu_{\mathrm{in},i_0}) + d_M(\mu_{\mathrm{in},i_0}, \mu_{\mathrm{in},i}).
\]
Summing over all terms, we obtain:
\[
\sum_{t=1}^T \sum_{i=1}^k d_M(\mu_{\mathrm{in},i}, \mu_{\mathrm{out},t}) \leq \sum_{t=1}^T \sum_{i=1}^k \left(d_M(\mu_{\mathrm{out},t}, \mu_{\mathrm{in},i_0}) + d_M(\mu_{\mathrm{in},i_0}, \mu_{\mathrm{in},i})\right).
\]
Factoring out the constant terms, we have:
\[
  \mathbb{E}_{x \sim P_\chi^{\mathrm{in}}}\bigl[ES(x)\bigr]-
  \mathbb{E}_{x \sim P_\chi^{\mathrm{out}}}\bigl[ES(x)\bigr]
  \le
    \frac{k}{2T} \sum_{t=1}^T d_M\!\bigl(\mu_{\mathrm{out},t}, \mu_{\mathrm{in}, i_{0}}\bigr) +
    \frac{1}{2} \sum_{i=1}^k d_M\!\bigl(\mu_{\mathrm{in}, i_{0}}, \mu_{\mathrm{in},i}\bigr).
\]
This completes the proof.
\end{proof}

\section{Algorithm description for DCNet}
\label{Algorithm}
\subsection{Details of DCNet} \label{DCNet_Details}
To better illustrate our method, the whole procedure of training and testing is provided in Algorithm~\ref{Alg_DCNet}.
\begin{algorithm}[htbp]
	\caption{DCNet Training and Test Algorithm}
	\label{Alg_DCNet}
	\textbf{Input}: Datasets $\{\mathcal{D}^{(t)}\}_{t=1}^T$; Encoder $f$;  Initial temperature $\tau^{(0)}$; Epoch for IOE $epoch_{IOE}$; Epoch for DAC $epoch_{DAC}$, etc.

	\begin{algorithmic}[1] 
		\STATE \textbf{\textit{\# Training Time}} \\
		\FOR{$t=1, 2,\dots,T$}
        \STATE Generate orthogonal unit basis vectors $\mathbf{\mu}^{(t)}$ by Eq. (\ref{ebv})
        
        \FOR{$i=1, 2,\dots,epoch_{IOE}$}
        \STATE Embed the new categories into the neighborhood of $\mathbf{\mu}^{(t)}$ by Eq. (\ref{LossIOE})
		\STATE Calculate $L_{HAT}$ by comparing with $mask^{(t-1)}$
		\STATE Update encoder $f$ by Eq. (\ref{Loss}) without $\mathcal{L}_{DAC}$
		\ENDFOR
		\STATE Calculates the aggregation degree $\omega^{(t)}$ and
        update temperature $\tau^{(t)}$ by Eq. (\ref{temp})
        \FOR{$j=1, 2,\dots,epoch_{DAC}$}
		\STATE Compensate dynamically for the degree of aggregation by Eq. (\ref{LossDAC})
		\STATE Update encoder $f$ by Eq. (\ref{Loss})
		\ENDFOR
        \STATE Update $mask^{(t)}$, $\omega^{(avg)}$
        \STATE Create and train a new OOD classifier
        
		\ENDFOR \\
		
		\STATE \textbf{\textit{\# Inference Time}} \\
		\STATE Make a decision using the encoder $f$, $mask^{(T)}$ and all the OOD classifiers by Eq. (\ref{decision}) \\

	\end{algorithmic}
	\textbf{Output}: Task-id $t$ and the predicted classes $c$
\end{algorithm}

\subsection{Details of HAT} \label{HAT_Details}
We emphasize that various TIL techniques are viable within the TIL+OOD framework. However, to maintain consistency with prior research, we adopt hard attention mask (HAT)~\cite{serra2018HAT} to mitigate catastrophic forgetting.
At each layer $l$ of the network, there is a trainable binary vector $\alpha^t_l$, whose 
$i$-th element $a^t_{i,l}\in\{0,1\}$ selectively control the activation of neuron $i$. 
Concretely, the output of layer $l$, denoted $h_l$, is elementwise multiplied by 
$\alpha^t_l$, as:
\[
h^{'}_l = \alpha^t_l \odot h_l.
\]

A neuron $i$ is considered essential for task $t$ if $\alpha^t_{i,l} = 1$ (and thus is protected from modification), whereas neurons with $\alpha^t_{i,l} = 0$ are treated as unimportant for this task and can be freely updated. Before training task $t$, we need to update all accumulated masks $\alpha^{<t}_{l}$:
\[
\alpha^{<t}_{l} = \max(\alpha^{<t-1}_{l},\alpha^{t-1}_{l}).
\]

When training task $t$, we modify the gradients of parameters that were deemed important for previous tasks $1,\dots,t-1$, thereby minimizing interference of the new task. Denote by $w_{ij,l}$ the parameter in row $i$, column $j$
of layer $l$. Its gradient is updated as follows:
\[
\nabla w_{ij,l}^{'} \;\leftarrow\;
\Bigl(1 - \min\bigl(\alpha^{<t}_{i,l},\,\alpha^{<t}_{j,l-1}\bigr)\Bigr) 
\,\nabla w_{ij,l},
\]

\noindent where $\alpha^{<t}_{i,l}$ is the accumulated hard attention for neuron $i$ at layer $l$ over all previous tasks. In particular, $\alpha^{<t}_{i,l} = 1$ if that neuron was used by any of the prior tasks. Therefore, the model freezes the gradient of parameters used in prior tasks, thus reducing interference.

Given the limited total capacity of the network, we aim to promote parameter sharing, which in turn facilitates the transfer of knowledge from prior tasks to new ones. For this reason, we make the mask as sparse as possible and introduce a regularization term accordingly:
\[
\mathcal{L}_{HAT} \;=\; 
\frac{\sum_{l,i} \,\alpha^t_{i,l}\,\bigl(1-\alpha^{<t}_{i,l}\bigr)}
     {\sum_{l,i} \,\bigl(1-\alpha^{<t}_{i,l}\bigr)}.
\]

In essence, this term encourages the model to avoid trivially setting all mask elements to 1.

\section{Additional implementation details} \label{Training_Details}
\subsection{Hyper-parameter} 
\label{Hyper-parameter}
For all baselines, we utilize either the results reported in the latest state-of-the-art baseline or reproduce the experiments using the hyperparameters provided in the original source code to ensure a fair comparison. All methods employed a ResNet-18 network trained from scratch as the backbone, without leveraging any pre-trained models. For the examplar-based approach, consistent with prior work~\cite{lin2024TPL}, we configured the replay buffer size to 2000.

For our approach, we follow the setup of the previous work\cite{Kim2022CLOM,kim2022theoretical}. For the CIFAR-100 and Tiny-ImageNet datasets, we train the backbone for 700 epochs using LARS~\cite{you2017large} with an initial learning rate of 0.1 and a batch size of 64. The DAC component is introduced at the 400th epoch, and the OOD classifier is subsequently optimized for 100 epochs using SGD. For the ImageNet-Subset dataset, the backbone is trained for 100 epochs, with the DAC component introduced at the 50th epoch, followed by optimizing the OOD classifier for 50 epochs using SGD, and the batch size is 32. Consistent with prior researches\cite{Kim2022CLOM,kim2022theoretical}, we employ a cosine scheduler and self-rotation augmentation during training. For IOE, the feature dimension of the penultimate layer is set to 512, the basis vector dimension to 256, and a cosine value of 0.1 is used to approximate orthogonality between vectors. The temperature $\tau_{\text{IOE}}$ is set to 0.05. For DAC, the initial temperature $\tau^{(0)}$ is set to 0.2 and dynamically adjusted based on the task. Regarding loss weight, $\mathcal{L}_{DAC}$ is assigned a weight of 1. For $\mathcal{L}_{HAT}$, the weight follows the prior setting\cite{Kim2022CLOM}; for instance, in the CIFAR100-10 task, it is 1.5 for the first task and 1 for subsequent tasks.
All experiments are implemented in PyTorch using NVIDIA RTX 3080-Ti GPUs.

\subsection{Metrics} 
\label{Metrics}
First we define the average accuracy after task $N$, where $R_{N,t}$ is the test accuracy for task $t$ after training on task $N$:
\[
A_{N}=\frac{1}{N}\sum_{t=1}^N R_{N,t}.
\]

When $N$ equal to the total number of tasks $T$, we get \textbf{$A_{\text{last}}$}, which represents the average accuracy after the last task; \textbf{$A_{\text{inc}}$} represents the average incremental accuracy across all tasks:
\[
A_{\text{last}}=\frac{1}{T}\sum_{t=1}^T R_{T,t},\quad
A_{\text{inc}}=\frac{1}{T}\sum_{N=1}^T A_{N}.
\]

\end{document}